\newcommand\EE{\mathbb{E}}
\newcommand\NN{\mathbb{N}}
\newcommand\PP{\mathbb{P}}
\newcommand\RR{\mathbb{R}}
\renewcommand{\d}[1]{\ensuremath{\operatorname{d}\!{#1}}}
\newtheorem{coro}{Corollary}
\newtheorem{theo}{Theorem}
\newtheorem{ass}{Assumption}
\theoremstyle{definition}
\newtheorem{exa}{Example}
\theoremstyle{definition}
\newcommand*\diff{\mathop{}\!\mathrm{d}}
\newcommand\manX{\mathcal{X}}
\newcommand\manF{\mathcal{F}}
\title{Wasserstein GAN}
\author[1]{Martin Arjovsky}
\author[2]{Soumith Chintala}
\author[1,2]{L\'eon Bottou}
\affil[1]{Courant Institute of Mathematical Sciences}
\affil[2]{Facebook AI Research}
\date{}
\begin{document}
\maketitle

\section{Introduction}
The problem this paper is concerned with is that of unsupervised learning.
Mainly, what does it mean to learn a probability
distribution? The classical answer to this is to learn a probability
density. This is often done by defining a parametric family of densities
$(P_\theta)_{\theta \in \RR^d}$ and finding the one that maximized the likelihood on our data:
if we have real data examples $\{x^{(i)}\}_{i=1}^m$, we would solve the problem
\begin{equation*}
\max_{\theta \in \RR^d} \frac{1}{m}\sum_{i=1}^m \log P_\theta(x^{(i)})
\end{equation*}
If the real data distribution $\PP_r$ admits a density and $\PP_\theta$ is the
distribution of the parametrized density $P_\theta$, then, asymptotically, this
amounts to minimizing the Kullback-Leibler divergence $KL(\PP_r \| \PP_\theta)$.

For this to make sense, we need the model density $P_\theta$ to exist.
This is not the case in the rather common situation where we are
dealing with distributions supported by low dimensional manifolds.
It is then unlikely that the model manifold and the true distribution's support
have a non-negligible intersection (see \cite{Arj-ea-Princ}),
and this means that the KL distance is not defined (or simply infinite).

The typical remedy is to add a noise term to the model
distribution. This is why virtually all generative models described in
the classical machine learning literature include a noise
component. In the simplest case, one assumes a Gaussian noise with
relatively high bandwidth in order to cover all the examples. It is
well known, for instance, that in the case of image generation models,
this noise degrades the quality of the samples and makes them
blurry.
For example, we can see in the recent paper \cite{Wu-ea-AIS}
that the optimal standard deviation of the noise added to the model
when maximizing likelihood is around 0.1 to each pixel in a generated image,
when the pixels were already normalized to be in the range $[0, 1]$. This is
a very high amount of noise, so much that when papers report the samples of
their models, they don't add the noise term on which they report likelihood numbers.
In other words, the added noise term
is clearly incorrect for the problem, but is needed to make the
maximum likelihood approach work.

Rather than estimating the density of $\PP_r$ which may not exist, we
can define a random variable $Z$ with a fixed distribution $p(z)$ and
pass it through a parametric function $g_\theta: \mathcal{Z} \rightarrow
\mathcal{X}$ (typically a neural network of some kind) that directly
generates samples following a certain distribution $\PP_\theta$. By
varying $\theta$, we can change this distribution and make it close to
the real data distribution $\PP_r$.  This is useful in two ways. First
of all, unlike densities, this approach can represent distributions
confined to a low dimensional manifold.  Second, the ability to easily
generate samples is often more useful than knowing the numerical value
of the density (for example in image superresolution or semantic
segmentation when considering the conditional distribution of the
output image given the input image). In general, it is computationally
difficult to generate samples given an arbitrary high dimensional
density \cite{Neal-AIS}.

Variational Auto-Encoders (VAEs) \cite{Kingma-ea-VAE} and Generative
Adversarial Networks (GANs) \cite{Good-ea-GAN} are well known examples
of this approach.  Because VAEs focus on the approximate likelihood of
the examples, they share the limitation of the standard models and
need to fiddle with additional noise terms. GANs offer much more
flexibility in the definition of the objective function, including
Jensen-Shannon \cite{Good-ea-GAN}, and all $f$-divergences
\cite{Nowo-ea-fgan} as well as some exotic combinations
\cite{Ferenc15}. On the other hand, training GANs is well known for
being delicate and unstable, for reasons theoretically investigated in
\cite{Arj-ea-Princ}.

In this paper, we direct our attention on the various ways to measure
how close the model distribution and the real distribution are, or
equivalently, on the various ways to define a distance or divergence
$\rho(\PP_\theta,\PP_r)$. The most fundamental difference between such
distances is their impact on the convergence of sequences of
probability distributions. A sequence of distributions
$(\PP_t)_{t\in\NN}$ converges if and only if there is a distribution
$\PP_\infty$ such that $\rho(\PP_t,\PP_\infty)$ tends to zero,
something that depends on how exactly the distance $\rho$ is defined.
Informally, a distance $\rho$ induces a weaker topology when it makes
it easier for a sequence of distribution to converge.\footnote{More
  exactly, the topology induced by $\rho$ is weaker than that induced
  by $\rho'$ when the set of convergent sequences under $\rho$ is a
  superset of that under $\rho'$.}  Section~\ref{sec-distances}
clarifies how popular probability distances differ in that respect.

In order to optimize the parameter $\theta$, it is of course desirable
to define our model distribution $\PP_\theta$ in a manner that makes
the mapping $\theta\mapsto\PP_\theta$ continuous.
Continuity means that when a sequence of parameters $\theta_t$ converges
to $\theta$, the distributions $\PP_{\theta_t}$ also converge
to $\PP_\theta$. However, it is essential to remember that the
notion of the convergence of the distributions $\PP_{\theta_t}$ depends
on the way we compute the distance between distributions.
The weaker this distance, the easier
it is to define a continuous mapping from $\theta$-space to
$\PP_{\theta}$-space, since it's easier for the
distributions to converge.
The main reason we care about the mapping $\theta \mapsto \PP_\theta$
to be continuous is as follows. If $\rho$ is our notion of
distance between two distributions, we would like to have a loss
function $\theta \mapsto \rho(\PP_\theta, \PP_r)$ that is continuous,
and this is equivalent to having the mapping $\theta \mapsto \PP_\theta$
be continuous when using the distance between distributions $\rho$.

\pagebreak
The contributions of this paper are:
\begin{itemize}
\item In Section~\ref{sec-distances}, we provide a comprehensive
  theoretical analysis of how the Earth Mover (EM) distance behaves in comparison to
  popular probability distances and divergences used in the
  context of learning distributions.
\item In Section~\ref{sec-wgan}, we define a form of GAN called Wasserstein-GAN
  that minimizes a reasonable and efficient approximation of the EM distance,
  and we theoretically show that the corresponding optimization 
  problem is sound.
\item In Section~\ref{sec-experiments}, we empirically show that WGANs
  cure the main training problems of GANs. In particular, training
  WGANs does not require maintaining a careful balance in training of
  the discriminator and the generator, and does not require a careful
  design of the network architecture either. The mode dropping
  phenomenon that is typical in GANs is also drastically reduced.  One
  of the most compelling practical benefits of WGANs is the ability to
  continuously estimate the EM distance by training the
  discriminator to optimality. Plotting these learning curves is not
  only useful for debugging and hyperparameter searches, but also
  correlate remarkably well with the observed sample quality.
\end{itemize}


\section{Different Distances}
\label{sec-distances}

We now introduce our notation. Let $\manX$ be a compact metric set
(such as the space of images $[0,1]^d$) and let $\Sigma$ denote the
set of all the Borel subsets of $\manX$. Let $\text{Prob}(\manX)$
denote the space of probability measures defined on $\manX$.
We can now define elementary distances and divergences
between two distributions $\PP_r,\PP_g\in\text{Prob}(\manX)$:
\begin{itemize}
\item
  The \emph{Total Variation} (TV) distance
  \[ \delta(\PP_r,\PP_g) = \sup_{A\in\Sigma} |\PP_r(A)-\PP_g(A)|~. \]
\item
  The \emph{Kullback-Leibler} (KL) divergence
  \[ KL(\PP_r\|\PP_g) = \int \log\left(\frac{P_r(x)}{P_g(x)}\right) P_r(x) d\mu(x)~, \]
  where both $\PP_r$ and $\PP_g$ are assumed to be absolutely continuous,
  and therefore admit densities, with respect to a same measure $\mu$ defined on $\manX$.\footnote{%
    Recall that a probability distribution
    $\PP_r\in\text{Prob}(\manX)$ admits a density $p_r(x)$ with
    respect to $\mu$, that is, $\forall A\in\Sigma$, $\PP_r(A) =
    \int_A P_r(x) d\mu(x)$, if and only it is absolutely continuous
    with respect to $\mu$, that is, $\forall A\in\Sigma$,
    $\mu(A)=0\Rightarrow\PP_r(A)=0$~.}
  The KL divergence is famously assymetric and possibly infinite when there
  are points such that $P_g(x)=0$ and $P_r(x)>0$.
\item
  The \emph{Jensen-Shannon} (JS) divergence
  \[ JS(\PP_r,\PP_g) = KL(\PP_r\|\PP_m)+KL(\PP_g\|\PP_m)~, \]
  where $\PP_m$ is the mixture $(\PP_r+\PP_g)/2$. This divergence
  is symmetrical and always defined because we can choose $\mu=\PP_m$.
\item
  The \emph{Earth-Mover} (EM) distance or Wasserstein-1
  \begin{equation}
    W(\PP_r, \PP_g) = \inf_{\gamma \in \Pi(\PP_r ,\PP_g)} \EE_{(x, y) \sim \gamma}\big[\:\|x - y\|\:\big]~,
    \label{eq::W}
  \end{equation}
  where $\Pi(\PP_r,\PP_g)$ denotes the set of all joint distributions
  $\gamma(x,y)$ whose marginals are respectively $\PP_r$ and
  $\PP_g$. Intuitively, $\gamma(x,y)$ indicates how much ``mass'' must
  be transported from $x$ to $y$ in order to transform the distributions
  $\PP_r$ into the distribution $\PP_g$. The EM distance then
  is the ``cost'' of the optimal transport plan.
\end{itemize}

The following example illustrates how apparently simple sequences of
probability distributions converge under the EM distance but do not
converge under the other distances and divergences defined above.

\begin{exa}[Learning parallel lines] \label{exa::lines}
Let $Z \sim U[0,1]$ the uniform distribution
on the unit interval. Let $\PP_0$ be the
distribution of $(0, Z) \in \RR^2$ (a 0 on the x-axis
and the random variable $Z$ on the y-axis), uniform on a straight
vertical line passing through the origin. Now
let $g_\theta(z) = (\theta, z)$ with $\theta$
a single real parameter. It is easy to see
that in this case,
\begin{itemize}
\item
  $W(\PP_0, \PP_\theta) = |\theta|$,
\item
  $\displaystyle JS(\PP_0,\PP_\theta) =
  \begin{cases}
    \log 2 &\quad \text{if } \theta \neq 0~, \\
    0 &\quad \text{if } \theta = 0~,
  \end{cases}$
\item
  $\displaystyle KL(\PP_\theta \| \PP_0) = KL(\PP_0 \| \PP_\theta) =
  \begin{cases}
    +\infty &\quad \text{if } \theta \neq 0~, \\
    0 &\quad \text{if } \theta = 0~,
  \end{cases}$
\item
  and $\displaystyle \delta(\PP_0,\PP_\theta) = 
  \begin{cases}
    1 &\quad \text{if } \theta \neq 0~, \\
    0 &\quad \text{if } \theta = 0~.
  \end{cases}$
\end{itemize}
When $\theta_t\rightarrow0$,
the sequence $(\PP_{\theta_t})_{t\in\NN}$ converges
to $\PP_0$ under the EM distance, but does not converge
at all under either the JS, KL, reverse KL, or TV divergences.
\autoref{fig::lc} illustrates this for the case of the EM and JS distances.

\begin{figure}[h!]
  \centering
  \begin{minipage}[b]{0.5\linewidth}
    \centering
    \includegraphics[scale=0.325]{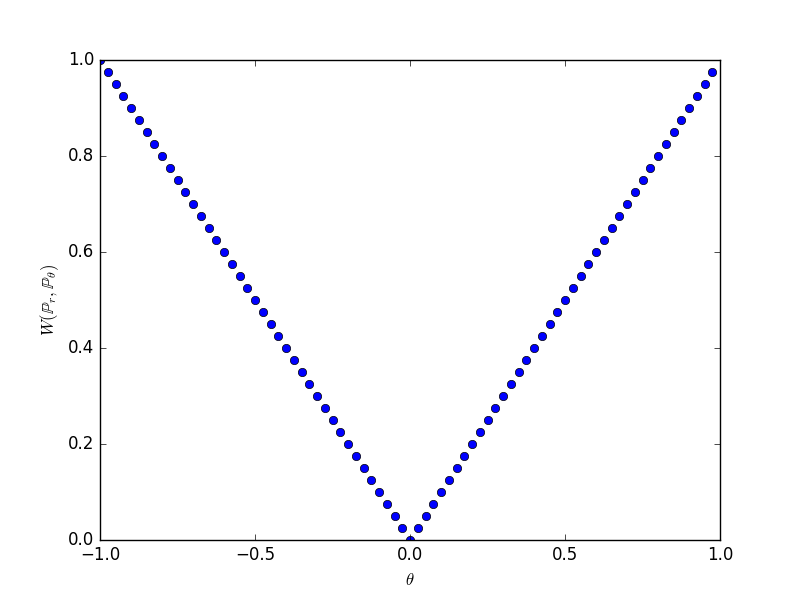} 
  \end{minipage}
  \begin{minipage}[b]{0.5\linewidth}
    \centering
    \includegraphics[scale=0.325]{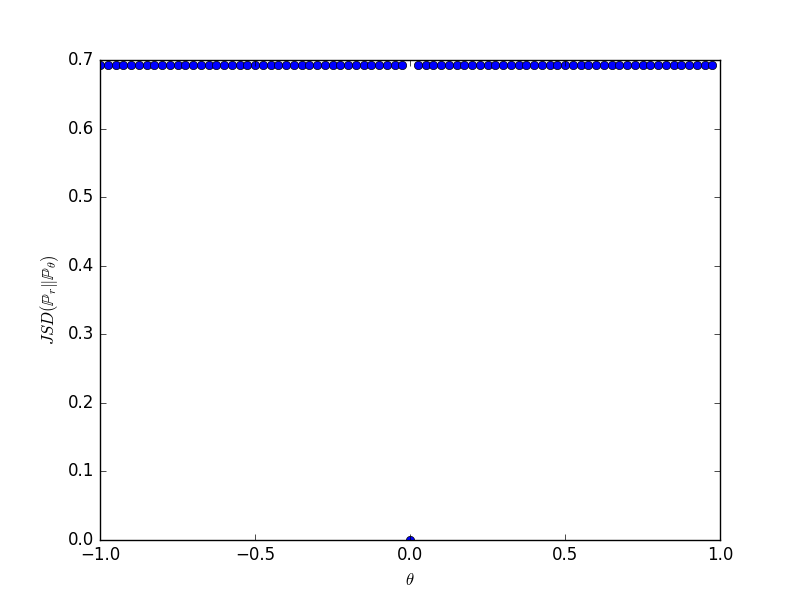}
  \end{minipage}
  \caption{These plots show $\rho(\PP_\theta,\PP_0)$ as a function of $\theta$
    when $\rho$ is the EM distance (left plot) or the JS divergence (right plot).
    The EM plot is continuous and provides a usable gradient everywhere.
    The JS plot is not continuous and does not provide a usable gradient.}
  \label{fig::lc}
\end{figure}
\end{exa}

Example \autoref{exa::lines} gives us a case where we can learn a
probability distribution over a low dimensional manifold by doing
gradient descent on the EM distance. This cannot be done with the
other distances and divergences because the resulting loss function is
not even continuous. Although this simple example features
distributions with disjoint supports, the same conclusion holds when
the supports have a non empty intersection contained in a set of
measure zero. This happens to be the case when two low dimensional
manifolds intersect in general position~\cite{Arj-ea-Princ}.


\if0
We now introduce our notation. Let $\mathcal{X} \subseteq{\RR^d}$
be a compact set (such as $[0,1]^d$ the space of images). We define
$\text{Prob}(\manX)$ to be the space of probability measures over
$\manX$. We note
$$C_b(\manX) = \{f: \manX \to \RR \text{, $f$ is continuous and bounded}\}$$
Note that if $f \in C_b(\manX)$,
we can define $\|f\|_\infty = \max_{x \in \manX} |f(x)|$, since
$f$ is bounded. With this norm, the space $(C_b(\manX), \|\cdot\|_\infty)$
is a normed vector space. As for any normed vector space, we can define
its dual
$$ C_b(\manX)^* = \{\phi: C_b(\manX) \rightarrow \RR \text{, $\phi$ is linear
and continuous} \} $$
and give it the dual norm $\|\phi\| = \sup_{f \in C_b(\manX), \|f\|_\infty \leq 1} |\phi(f)|$.

With this definitions, $(C_b(\manX)^*, \|\cdot\|)$ is another normed space.
Now let $\mu$ be a signed measure over $\manX$, and let us define
the total variation distance
$$\|\mu\|_{TV} = \sup_{A \subseteq \manX} |\mu(A)|$$
where the supremum is taken all Borel sets in $\manX$.
Since the total variation is a norm, then if we have $\PP_r$
and $\PP_\theta$ two probability distributions over $\manX$,
$$\delta(\PP_r, \PP_\theta) := \|\PP_r - \PP_\theta\|$$
is a distance in $\text{Prob}(\manX)$ (called the total variation
distance).

We can consider
$$ \Phi: (\text{Prob}(\manX), \delta) \rightarrow (C_b(\manX)^*, \|\cdot \|)$$
where $\Phi(\PP)(f) := \EE_{x \sim \PP}[f(x)]$ is a linear function
over $C_b(\manX)$. The Riesz Representation theorem (\cite{Kakutani-Riesz},
Theorem 10) tells us that $\Phi$ is an isometric immersion. This
tells us that we can effectively consider $\text{Prob}(\manX)$ with
the total variation distance as a subset of $C_b(\manX)^*$ with
the norm distance. Thus, just to accentuate it one more time,
the total variation over $\text{Prob}(\manX)$ is exactly
the norm distance over $C_b(\manX)^*$.

Let us stop for a second and analyze what all this technicality meant.
The main thing to carry is that we introduced a distance $\delta$
over probability distributions. When looked as a distance over
a subset of $C_b(\manX)^*$, this distance gives the norm topology.
The norm topology is very strong. Therefore, we can expect that
not many functions $\theta \mapsto \PP_\theta$ will be continuous
when measuring distances between distributions with $\delta$. As
we will show later in Theorem \autoref{theo::dist}, $\delta$ gives the same topology
as the Jensen-Shannon divergence, pointing to the fact that the
JS is a very strong distance, and is thus more propense to
give a discontinuous loss function.

Now, all dual spaces (such as $C_b(\manX)^*$ and thus
$\text{Prob}(\manX)$) have a strong topology (induced by the norm),
and a weak* topology. As the name suggests, the weak* topology
is much weaker than the strong topology. In the case of
$\text{Prob}(\manX)$, the strong topology is given by the
total variation distance, and the weak* topology is given
by the Wasserstein distance (among others) \cite{Vil}.

Since the Wasserstein distance is much weaker than the strong
topology, we can now ask whether $W(\PP_r, \PP_\theta)$ is
a continuous loss function on $\theta$ 
under mild assumptions. This, and more, is
true, as we now state and prove.

\begin{theo} \label{theo::regcost} Let $\PP_r$ be a fixed distribution over $\mathcal{X}$.
Let $Z$ be a random variable (e.g Gaussian) over another
space $\mathcal{Z}$. Let $g: \mathcal{Z} \times \RR^d \rightarrow \mathcal{X}$
be a function, that will be denoted $g_\theta(z)$ with $z$ the first coordinate
and $\theta$ the second. Let $\PP_\theta$ denote the distribution of $g_\theta(Z)$.
Then,
\begin{enumerate}
\item If $g$ is continuous in $\theta$, so is $W(\PP_r, \PP_\theta)$.
\item If $g$ is locally Lipschitz and satisfies regularity
assumption \ref{ass::lip},
then $W(\PP_r, \PP_\theta)$ is continuous
everywhere, and differentiable almost everywhere.
\item Statements 1-2 are false for the Jensen-Shannon divergence $JS(\PP_r, \PP_\theta)$
and all the KLs.
\end{enumerate}
\end{theo}
\begin{proof} 
See Appendix \ref{app::proofs}
\end{proof}

\begin{exa}[Learning parallel lines] \label{exa::lines}
Let $Z \sim U[0,1]$ the uniform distribution
on the unit interval. Let $\PP_0$ be the
distribution of $(0, Z) \in \RR^2$ (a 0 on the x-axis
and the random variable $Z$ on the y-axis), uniform on a straight
vertical line passing through the origin. Now
let $g_\theta(z) = (\theta, z)$ with $\theta$
a single real parameter. It is easy to see
in this case $W(\PP_0, \PP_\theta) = |\theta|$
and
\[
JS(\PP_0,\PP_\theta) =
\begin{cases}
 \log 2 &\quad \text{if } \theta \neq 0 \\
  0 &\quad \text{if } \theta = 0
\end{cases}
\]
which is clearly not continuous. Similarly, in the case of
the KLs, we would see that
\[
KL(\PP_\theta \| \PP_0) = KL(\PP_0 \| \PP_\theta) =
\begin{cases}
 +\infty &\quad \text{if } \theta \neq 0 \\
  0 &\quad \text{if } \theta = 0
\end{cases}
\]
\begin{figure}[h]
  \centering
  \begin{minipage}[b]{0.5\linewidth}
    \centering
    \includegraphics[scale=0.325]{wloss.png}
  \end{minipage}
  \begin{minipage}[b]{0.5\linewidth}
    \centering
    \includegraphics[scale=0.325]{jsdloss.png}
  \end{minipage}
  \caption{Loss function when varying parameters
for Wasserstein (left) and JS (right). Wasserstein is continuous
while JS is not.}
  \label{fig::lc}
\end{figure}
\end{exa}

Example \autoref{exa::lines} gives us a case where we can learn
a probability distribution over a low dimensional manifold
by doing gradient descent on the Wasserstein distance. Something
we know can't be done with the JS because of the theorems
of \cite{Arj-ea-Princ}. Note that the discontinuity problem
would arize even if the manifolds intersect, as proved in
\cite{Arj-ea-Princ}. This is because unless they are exactly
the same, the intersection would be negligible (lying on a set
of measure 0 for both manifolds, in this case a point).
\fi


Since the Wasserstein distance is much weaker than the JS distance\footnote{
The argument for \emph{why} this happens, and indeed
how we arrived to the idea that Wasserstein is what
we should really be optimizing is displayed in Appendix
\autoref{app::weak}. We strongly encourage the interested
reader who is not afraid of the mathematics to go through it.},
we can now ask whether $W(\PP_r, \PP_\theta)$ is
a continuous loss function on $\theta$ 
under mild assumptions. This, and more, is
true, as we now state and prove.

\begin{theo} \label{theo::regcost} Let $\PP_r$ be a fixed distribution over $\mathcal{X}$.
Let $Z$ be a random variable (e.g Gaussian) over another
space $\mathcal{Z}$. Let $g: \mathcal{Z} \times \RR^d \rightarrow \mathcal{X}$
be a function, that will be denoted $g_\theta(z)$ with $z$ the first coordinate
and $\theta$ the second. Let $\PP_\theta$ denote the distribution of $g_\theta(Z)$.
Then,
\begin{enumerate}
\item If $g$ is continuous in $\theta$, so is $W(\PP_r, \PP_\theta)$.
\item If $g$ is locally Lipschitz and satisfies regularity
assumption \ref{ass::lip},
then $W(\PP_r, \PP_\theta)$ is continuous
everywhere, and differentiable almost everywhere.
\item Statements 1-2 are false for the Jensen-Shannon divergence $JS(\PP_r, \PP_\theta)$
and all the KLs.
\end{enumerate}
\end{theo}
\begin{proof} 
See Appendix \ref{app::proofs}
\end{proof}

The following corollary tells us that learning by minimizing
the EM distance makes sense (at least in theory) with neural networks.
\begin{coro} \label{coro::nnregcost}
Let $g_\theta$ be any feedforward neural network\footnote{By a
feedforward neural network we mean a function composed by
affine transformations and pointwise nonlinearities which are
smooth Lipschitz
functions (such as the sigmoid, tanh, elu, softplus, etc).
\textbf{Note}: the statement is also true for rectifier
nonlinearities but the proof is more
technical (even though very similar) so we omit it.} parameterized by $\theta$, 
and $p(z)$ a prior over $z$ such that $\EE_{z \sim p(z)}[\|z\|] < \infty$ (e.g.
Gaussian, uniform, etc.). Then assumption
\ref{ass::lip} is satisfied and therefore $W(\PP_r, \PP_\theta)$
is continuous everywhere and differentiable almost everywhere.
\end{coro}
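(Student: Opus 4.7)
The plan is to reduce the corollary to part~2 of Theorem~\ref{theo::regcost} by checking that a neural network $g_\theta$ is locally Lipschitz in $\theta$ with a local Lipschitz constant $L(\theta,z)$ that is integrable under the prior $p(z)$; assumption~\ref{ass::lip} is exactly such an integrability condition. Once this is established, continuity everywhere and almost-everywhere differentiability of $\theta\mapsto W(\PP_r,\PP_\theta)$ follow immediately from the theorem.

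The main technical work is producing the bound on $L(\theta,z)$, and I would do it by induction on the depth of the network. Write $g_\theta=\varphi_k\circ f_{k-1}\circ\varphi_{k-1}\circ\cdots\circ f_1\circ\varphi_1$, where each $\varphi_i:h\mapsto W_ih+b_i$ is an affine map whose weights $(W_i,b_i)$ are part of $\theta$, and each $f_i$ is a smooth $L_i$-Lipschitz nonlinearity. First I would establish the forward estimate: since $\|\varphi_i(h)\|\le\|W_i\|\|h\|+\|b_i\|$ and $\|f_i(h)\|\le L_i\|h\|+\|f_i(0)\|$, a straightforward induction shows that the output of the $i$-th layer has norm at most $A_i(\theta)(1+\|z\|)$ for some function $A_i$ that is continuous in $\theta$.

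Next, using smoothness of the $f_i$ and the chain rule, I would bound the partial derivatives $\partial g_\theta(z)/\partial\theta$ in the same inductive manner: each backprop factor is either a weight matrix, the derivative of an affine map (which is the identity or the previous activation), or $f_i'$ (uniformly bounded on compact sets because $f_i$ is smooth and Lipschitz). Combining this with the forward linear growth yields $\|\partial g_\theta(z)/\partial\theta\|\le B(\theta)(1+\|z\|)$ with $B$ continuous in $\theta$. Fixing $\theta_0$ and a compact neighborhood $U\ni\theta_0$, the mean value theorem then gives
\[
\|g_{\theta_1}(z)-g_{\theta_2}(z)\|\le L(\theta_0,z)\,\|\theta_1-\theta_2\|\qquad\forall\theta_1,\theta_2\in U,
\]
with $L(\theta_0,z)=C(\theta_0)(1+\|z\|)$ for some constant $C(\theta_0)<\infty$ depending only on $U$ and the network.

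To close, I would verify integrability: $\EE_{z\sim p(z)}[L(\theta_0,z)]\le C(\theta_0)\bigl(1+\EE_{z\sim p(z)}[\|z\|]\bigr)<\infty$ by the hypothesis on the prior. Local Lipschitzness of $g$ in $(\theta,z)$ jointly follows from the same estimates (affine maps and smooth Lipschitz $f_i$ are locally Lipschitz, and the composition of locally Lipschitz maps is locally Lipschitz). This verifies assumption~\ref{ass::lip} and invokes Theorem~\ref{theo::regcost}. The only real obstacle is the layerwise bookkeeping in the induction; it is also precisely what breaks for non-smooth nonlinearities such as ReLU, where the chain-rule step must be replaced by a direct Lipschitz estimate on compositions, which is why the corollary's footnote omits that case.
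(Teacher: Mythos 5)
Your proposal follows essentially the same route as the paper's own proof: bound the local Lipschitz constant $L(\theta,z)$ by the norm of the Jacobian $\nabla_{\theta,z}\,g_\theta(z)$, propagate a layer-by-layer estimate through the affine maps and the Lipschitz nonlinearities to obtain a bound of the form $B(\theta)(1+\|z\|)$, and conclude integrability from $\EE_{z\sim p(z)}[\|z\|]<\infty$. The paper writes out the chain-rule products $\prod_k W_k D_k$ explicitly and obtains $\|\nabla_{\theta,z}g_\theta(z)\|\le C_1(\theta)+C_2(\theta)\|z\|$, whereas you phrase the same estimate as an induction and are slightly more careful with the additive constants coming from biases and $f_i(0)$, but the key lemma and structure are identical.
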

\begin{proof}
See Appendix \ref{app::proofs}
\end{proof}

All this shows that EM is a much more sensible
cost function for our problem than at least the Jensen-Shannon
divergence. The following theorem describes the relative strength of
the topologies induced by these distances and divergences, with KL the strongest,
followed by JS and TV, and EM the weakest.

\begin{theo} \label{theo::dist}
Let $\PP$ be a distribution on a compact space $\manX$ and 
$(\PP_n)_{n \in \NN}$ be a sequence
of distributions on $\manX$. Then, considering
all limits as $n \to \infty$,
\begin{enumerate}
\item The following statements are equivalent
\begin{itemize}
  \item $\delta(\PP_n, \PP) \to 0$
with $\delta$ the total variation distance.
  \item $JS(\PP_n,\PP) \to 0$ with
$JS$ the Jensen-Shannon divergence.
\end{itemize}
\item The following statements are equivalent
\begin{itemize}
  \item $W(\PP_n, \PP) \to 0$.
  \item $\PP_n \xrightarrow{\mathcal{D}} \PP$ where $\xrightarrow{\mathcal{D}}$ represents
convergence in distribution for random variables.
\end{itemize}
\item $KL(\PP_n \| \PP) \to 0$ or $KL(\PP \| \PP_n) \to 0$ imply
the statements in (1). 
\item The statements in (1) imply the statements in (2).
\end{enumerate}
\end{theo}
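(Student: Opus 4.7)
My plan is to reduce each of the four claims to an explicit inequality comparing pairs of the distances, with the one exception of Part 2, which requires Kantorovich--Rubinstein duality together with an Arzel\`a--Ascoli argument. Throughout I will abbreviate $\PP_m = (\PP_n + \PP)/2$.

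For Part 1 I would sandwich $JS$ between two powers of $\delta$. Applying Pinsker's inequality to $\PP_n$ and $\PP_m$ gives $2\delta(\PP_n,\PP_m)^2 \le KL(\PP_n\|\PP_m) \le JS(\PP_n,\PP)$, and since $\delta(\PP_n,\PP_m) = \tfrac{1}{2}\delta(\PP_n,\PP)$ this yields the lower bound $\delta(\PP_n,\PP)^2 \le 2\,JS(\PP_n,\PP)$. For the reverse bound, let $f = d\PP_n/d\PP_m \in [0,2]$, so that $JS(\PP_n,\PP) = \int h(f)\,d\PP_m$ with $h(t) = t\log t + (2-t)\log(2-t)$. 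A short concavity argument on $[0,1]$ establishes the pointwise estimate $h(1+s) \le 2(\log 2)\,|s|$; combined with the identity $\int|f-1|\,d\PP_m = \delta(\PP_n,\PP)$ this integrates to $JS(\PP_n,\PP) \le 2(\log 2)\,\delta(\PP_n,\PP)$. Part 3 then follows immediately from Pinsker in the form $\delta \le \sqrt{KL/2}$ applied in either slot, combined with Part 1 to pass from $\delta$ to $JS$. For Part 4, since $\manX$ is compact with diameter $D$, the bound $W(\PP_n,\PP) \le D\cdot\delta(\PP_n,\PP)$ follows from the coupling that keeps the common part $\PP_n\wedge\PP$ on the diagonal and transports the residual mass of total weight $\delta(\PP_n,\PP)$ at cost at most $D$.

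Part 2 is the only genuinely analytical step, and I would route both directions through the Kantorovich--Rubinstein duality
\[
W(\PP_n,\PP) = \sup_{\|f\|_{\mathrm{Lip}}\le 1}\int f\,d(\PP_n - \PP).
\]
The direction $W\to 0 \Rightarrow \PP_n \xrightarrow{\mathcal{D}} \PP$ is short: on compact $\manX$ any continuous bounded function can be uniformly approximated by Lipschitz ones (for instance through the infimal convolution $f_\varepsilon(x) = \inf_y [f(y) + \varepsilon^{-1} d(x,y)]$), and on Lipschitz test functions the duality gives direct control. The reverse direction is the main obstacle: I need to upgrade pointwise convergence of $\int f\,d\PP_n$ for each fixed continuous $f$ to uniform vanishing of the supremum over the full $1$-Lipschitz class. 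My plan is to exploit compactness by observing that the set of $1$-Lipschitz functions vanishing at a fixed basepoint is uniformly bounded by $D$ and equicontinuous, hence totally bounded in $C(\manX)$ by Arzel\`a--Ascoli; for any $\varepsilon>0$ the supremum in the duality is then attained up to $\varepsilon$ on a finite $\varepsilon$-net of test functions, on which weak convergence applies term by term. If this hands-on argument is considered heavy for the appendix, the statement in the compact case can simply be cited from the optimal transport literature (e.g.\ Theorem 6.9 of Villani's \emph{Optimal Transport: Old and New}).
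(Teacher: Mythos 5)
Your proof is correct and reaches the same conclusions, but via noticeably cleaner explicit inequalities in two places, so it is worth comparing the routes.

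For Part~1, the paper's argument in the direction $\delta\to 0\Rightarrow JS\to 0$ proceeds by truncating the Radon--Nikodym density $f_n = d\PP_n/d\PP_m$ on the set $\{f_n>1+\epsilon\}$, bounding $\PP_n$ of that set by a multiple of $\delta(\PP_n,\PP)$, and passing to a $\limsup$ in $\epsilon$; it does not produce a closed-form comparison. Your route instead sandwiches the two quantities outright: $\tfrac{1}{2}\,\delta(\PP_n,\PP)^2 \le JS(\PP_n,\PP) \le 2(\log 2)\,\delta(\PP_n,\PP)$, with the upper bound coming from $JS = \int h(f)\,d\PP_m$, $h(t)=t\log t+(2-t)\log(2-t)$, together with the pointwise estimate $h(1+s)\le 2(\log 2)\,|s|$ and the identity $\int|f-1|\,d\PP_m = \delta(\PP_n,\PP)$. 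That pointwise estimate is a consequence of the \emph{convexity} of $h$ on $[0,2]$ (chord above graph from $(1,0)$ to $(2,2\log2)$, then symmetry $h(1+s)=h(1-s)$); you wrote ``concavity,'' which is a slip, but the inequality is correct. Having both a quadratic lower bound and a linear upper bound is strictly more informative than the paper's argument and is the kind of quantitative statement one would want anyway.

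For Part~4 the paper simply invokes the fact that $\delta$ and $W$ induce, respectively, the strong and weak-$*$ topologies on $\text{Prob}(\manX)\subset C(\manX)^*$, leaving the implication implicit. Your explicit coupling --- place the common mass $\PP_n\wedge\PP$ on the diagonal and ship the residual (of total mass $\delta(\PP_n,\PP)$) at cost at most $D=\mathrm{diam}(\manX)$ --- gives the concrete bound $W(\PP_n,\PP)\le D\,\delta(\PP_n,\PP)$, which is self-contained and arguably more in the spirit of the rest of the appendix. Parts~2 and~3 coincide with the paper in substance: Part~3 is Pinsker in both, and for Part~2 the paper cites Villani, which you also offer as the fallback; your Arzel\`a--Ascoli sketch for the nontrivial direction (equicontinuity and uniform boundedness of normalized $1$-Lipschitz functions, finite $\varepsilon$-net, then convergence in distribution term by term) is sound on a compact metric space.
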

\begin{proof}
See Appendix \ref{app::proofs}
\end{proof}

This highlights the fact that 
the KL, JS, and TV distances are not sensible
cost functions when learning distributions
supported by low dimensional manifolds.
However the EM distance is sensible
in that setup. This obviously leads us to the next section
where we introduce a practical approximation
of optimizing the EM distance.

\section{Wasserstein GAN}
\label{sec-wgan}

Again, Theorem \ref{theo::dist} points to the fact that
$W(\PP_r, \PP_\theta)$ might have nicer properties
when optimized than $JS(\PP_r,\PP_\theta)$.
However, the infimum in
\eqref{eq::W} is highly intractable. On the other hand,
the Kantorovich-Rubinstein duality \cite{Vil}
tells us that
\begin{equation} \label{eq::KR}
W(\PP_r, \PP_\theta) = \sup_{\|f\|_L \leq 1} \EE_{x \sim \PP_r}
[f(x)] - \EE_{x \sim \PP_\theta}[f(x)]
\end{equation}
where the supremum is over all the 1-Lipschitz functions
$f: \manX \rightarrow \RR$. Note that if we replace $\|f\|_L \leq 1$
for $\|f\|_L \leq K$ (consider $K$-Lipschitz for some constant $K$), then
we end up with $K \cdot W(\PP_r, \PP_g)$. Therefore, if we have a
parameterized family of functions $\{f_w\}_{w \in \mathcal{W}}$
that are all $K$-Lipschitz for some $K$, we could consider
solving the problem
\begin{equation} \label{eq::wgan}
\max_{w \in \mathcal{W}} \EE_{x \sim \PP_r}[f_w(x)] -
\EE_{z \sim p(z)} [f_w(g_\theta(z)]
\end{equation}
and if the supremum in \eqref{eq::KR} is attained
for some $w \in \mathcal{W}$ (a pretty strong assumption
akin to what's assumed when proving consistency of an
estimator), this process would
yield a calculation of $W(\PP_r, \PP_\theta)$ up to
a multiplicative constant. Furthermore, we could consider
differentiating $W(\PP_r, \PP_\theta)$ (again, up to a constant)
by back-proping through equation \eqref{eq::KR} via
estimating $\EE_{z \sim p(z)}[\nabla_\theta f_w(g_\theta(z))]$.
While this is all intuition, we now prove that this process
is principled under the optimality assumption.
\begin{theo} \label{theo::gradW}
Let $\PP_r$ be any distribution. Let $\PP_\theta$ be
the distribution of $g_\theta(Z)$ with $Z$ a random
variable with density $p$ and $g_\theta$
a function satisfying assumption \ref{ass::lip}. 
Then, there is a solution $f: \manX \rightarrow \RR$
to the problem
$$ \max_{\|f\|_L \leq 1} \EE_{x \sim \PP_r}[f(x)] - 
  \EE_{x \sim \PP_\theta} [f(x)] $$
and we have
$$\nabla_\theta W(\PP_r, \PP_\theta)
= -\EE_{z \sim p(z)}[\nabla_\theta f(g_\theta(z))] $$
when both terms are well-defined.
\end{theo}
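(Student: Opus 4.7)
The plan is to prove the two assertions — existence of a maximizer $f$, and the gradient identity — in sequence.

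\textbf{Existence.} Without loss of generality I would restrict the supremum to $\{f : \|f\|_L \leq 1,\ f(x_0)=0\}$ for a fixed basepoint $x_0\in\manX$, since shifting $f$ by a constant leaves the objective unchanged. On the compact metric space $\manX$ this set is uniformly bounded (by the diameter of $\manX$) and equicontinuous (every element is 1-Lipschitz), so by Arzel\`a--Ascoli it is compact in $(C(\manX),\|\cdot\|_\infty)$. The functional $f\mapsto\EE_{\PP_r}[f]-\EE_{\PP_\theta}[f]$ is linear and 1-Lipschitz in the sup norm, hence continuous, and therefore attains its maximum on this compact set.

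\textbf{Gradient formula.} Fix $\theta$, pick an optimizer $f$ at $\theta$, and for each $h\in\RR^d$ let $f_h$ be an optimizer at $\theta+h$. Since $f_h$ is feasible at $\theta$ and $f$ is feasible at $\theta+h$, and the $\EE_{\PP_r}$-terms are independent of $\theta$, I obtain the sandwich
\begin{equation*}
-\EE_z\bigl[f(g_{\theta+h}(z)) - f(g_\theta(z))\bigr] \,\leq\, W(\PP_r,\PP_{\theta+h}) - W(\PP_r,\PP_\theta) \,\leq\, -\EE_z\bigl[f_h(g_{\theta+h}(z)) - f_h(g_\theta(z))\bigr].
\end{equation*}
By the 1-Lipschitzness of $f$ and $f_h$ together with assumption~\ref{ass::lip}, the integrands on both sides are pointwise bounded by $\|g_{\theta+h}(z)-g_\theta(z)\|\leq L(z)\|h\|$ for some $L\in L^1(p)$. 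At any $\theta$ at which $W(\PP_r,\PP_\cdot)$ is differentiable and $f\circ g_\cdot(z)$ is differentiable in $\theta$ for $p$-a.e. $z$ — a generic $\theta$ by Theorem~\ref{theo::regcost} and Rademacher's theorem applied to the 1-Lipschitz $f$ — the lower bound expands, by dominated convergence and the chain rule, as $-\EE_z[\nabla_\theta f(g_\theta(z))]\cdot h + o(\|h\|)$.

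\textbf{Main obstacle.} The delicate step is the matching upper bound, because $f_h$ varies with $h$. The saving grace is that $f_h$ stays in the same Arzel\`a--Ascoli-compact set, so along any sequence $h_n\to 0$ a subsequence of $f_{h_n}$ converges uniformly to some 1-Lipschitz $\bar f$; passing to the limit in the optimality inequality $V(f_{h_n},\theta+h_n)\geq V(g,\theta+h_n)$ against arbitrary 1-Lipschitz test $g$, and using the continuity of $\theta\mapsto V(g,\theta)$ (again justified by the $L(z)\|h\|$ domination), shows $\bar f$ is itself an optimizer at $\theta$. Substituting $\bar f$ into the upper bound, dominated convergence collapses it to $-\EE_z[\nabla_\theta \bar f(g_\theta(z))]\cdot h + o(\|h\|)$ as well, which combined with the lower bound yields $\nabla_\theta W(\PP_r,\PP_\theta) = -\EE_z[\nabla_\theta f(g_\theta(z))]$ for this choice of optimizer. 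The ``when both terms are well-defined'' caveat in the statement is precisely what absorbs the measure-zero $\theta$ at which $W$ or $f\circ g_\theta$ fails to be differentiable.
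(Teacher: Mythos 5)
Your overall architecture matches the paper's: establish existence of an optimizer via compactness of the unit Lipschitz ball, then obtain the gradient formula from an envelope-type argument combined with a dominated-convergence swap of $\nabla_\theta$ and $\EE_z$. The paper simply cites Kantorovich--Rubinstein for existence and Milgrom--Segal (Theorem 1) for the envelope step, and then devotes the bulk of the proof to the delicate gradient--expectation swap (Rademacher on the jointly Lipschitz map $(\theta,z)\mapsto f(g_\theta(z))$, Fubini to deduce a.e.\ $\theta$ differentiability for $p$-a.e.\ $z$, then dominated convergence with dominating function $2L(\theta_0,z)$). You treat that last step somewhat loosely as ``dominated convergence and the chain rule''; the chain rule is not directly available since $f$ is merely Lipschitz, and the paper's Rademacher--Fubini route is precisely what makes $\nabla_\theta[f(g_\theta(z))]$ well-defined for a.e.\ $z$.

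The more serious issue is the ``main obstacle'' paragraph. First, it is unnecessary: once you have assumed, as you do, that $W(\PP_r,\PP_\cdot)$ is differentiable at $\theta$ and that $\theta\mapsto\EE_z[f(g_\theta(z))]$ is differentiable there, the lower envelope inequality alone already yields the identity. Indeed, the inequality $W(\theta+h)-W(\theta)\ge V(f,\theta+h)-V(f,\theta)$ holds for every $h$; writing both sides as first-order expansions and using that it holds for both $h$ and $-h$ forces the two gradients to coincide. This is exactly the content of the envelope theorem the paper invokes; no control of $f_h$ is needed. Second, the argument you give for the upper bound is actually broken. Uniform convergence $f_{h_n}\to\bar f$ does not control difference quotients at scale $\|h_n\|$: setting $\phi_n=f_{h_n}-\bar f$, the error term $[\phi_n(g_{\theta+h_n}(z))-\phi_n(g_\theta(z))]/\|h_n\|$ is bounded by $2L(z)$ but need not tend to $0$ --- one can have $\|\phi_n\|_\infty\to 0$ while the ratio stays of order one (e.g.\ $\phi_n$ oscillating at wavelength comparable to $\|h_n\|$). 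So the Arzel\`a--Ascoli limit $\bar f$ cannot be substituted into the upper bound in the way you describe. The fix is simply to delete that paragraph and use the $\pm h$ trick on the lower bound, which is the route the paper takes via Milgrom--Segal.
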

\begin{proof} See Appendix \autoref{app::proofs}
\end{proof}

Now comes the question of finding the function $f$ that
solves the maximization problem in equation \eqref{eq::KR}.
To roughly approximate
this, something that we can do is train a neural network
parameterized with weights $w$ lying in a compact space
$\mathcal{W}$ and then backprop through
$\EE_{z \sim p(z)}[\nabla_\theta f_w(g_\theta(z))]$, as we
would do with a typical GAN. Note that the fact that 
$\mathcal{W}$ is compact implies that all the functions
$f_w$ will be $K$-Lipschitz for some $K$ that only depends
on $\mathcal{W}$ and not the individual weights, therefore
approximating \eqref{eq::KR} up to an irrelevant scaling factor
and the capacity of the `critic' $f_w$. In order to have parameters $w$
lie in a compact space, something simple we can do is clamp
the weights to a fixed box (say $\mathcal{W} = [-0.01,0.01]^l$) after each
gradient update. The Wasserstein Generative Adversarial
Network (WGAN) procedure is described in Algorithm \autoref{algo::wgan}.

Weight clipping is a clearly terrible way to enforce a Lipschitz constraint.
If the clipping parameter is large, then it can take a long time
for any weights to reach their limit, thereby making it harder
to train the critic till optimality. If the clipping is small, this
can easily lead to vanishing gradients when the number of layers is big,
or batch normalization is not used (such as in RNNs). We experimented
with simple variants (such as projecting the weights to a sphere) with
little difference, and we stuck with weight clipping due to its simplicity
and already good performance. However, we do leave the topic of
enforcing Lipschitz constraints in a neural network setting for further
investigation, and we actively encourage interested researchers to improve
on this method.

\begin{algorithm}[t!]
  \caption{WGAN, our proposed algorithm. All experiments in the paper
used the default values $\alpha = 0.00005$, $c = 0.01$, $m=64$, $n_{\text{critic}}=5$.}\label{algo::wgan}
  \begin{algorithmic}[1]
    \Require: $\alpha$, the learning rate. $c$, the clipping parameter. $m$, the
      batch size. $n_{\text{critic}}$, the number of iterations of the critic
      per generator iteration.
    \Require: $w_0$, initial critic parameters. $\theta_0$,
      initial generator's parameters.
    \While{$\theta$ has not converged}
      \For{$t = 0, ..., n_{\text{critic}}$}
        \State Sample $\{x^{(i)}\}_{i=1}^m \sim \PP_r$ a batch from the real data.
        \State Sample $\{z^{(i)}\}_{i=1}^m \sim p(z)$ a batch of prior samples.
        \State $g_w \gets \nabla_w \left[\frac{1}{m}\sum_{i=1}^m f_w(x^{(i)}) - \frac{1}{m} \sum_{i=1}^m f_w(g_\theta(z^{(i)})) \right]$
        \State $w \gets w + \alpha \cdot \text{RMSProp}(w, g_w) $
        \State $w \gets \text{clip}(w, -c, c) $
      \EndFor
      \State Sample $\{z^{(i)}\}_{i=1}^m \sim p(z)$ a batch of prior samples.
      \State $g_\theta \gets -\nabla_\theta \frac{1}{m} \sum_{i=1}^m f_w(g_\theta(z^{(i)}))$ 
      \State $\theta \gets \theta - \alpha \cdot \text{RMSProp}(\theta, g_\theta)$
    \EndWhile
\end{algorithmic}
\end{algorithm}

The fact that the EM distance is continuous and differentiable a.e.
means that we can (and should) train
the critic till optimality. The argument is simple,
the more we train the critic, the more reliable gradient of
the Wasserstein we get, which is actually useful by the
fact that Wasserstein is differentiable almost everywhere.
For the JS, as the discriminator gets better the gradients get
more reliable but the true gradient is 0 since the JS is locally
saturated and we get vanishing gradients,
as can be seen in \autoref{fig::lc} of this paper
and Theorem 2.4 of \cite{Arj-ea-Princ}. In \autoref{fig::optdiscs}
we show a proof of concept of this, where we train
a GAN discriminator and a WGAN critic till optimality.
The discriminator learns very quickly to distinguish between
fake and real, and as expected provides no reliable gradient
information. The critic, however, can't saturate, and converges
to a linear function that gives remarkably clean gradients everywhere.
The fact that we constrain the weights limits the possible
growth of the function to be at most linear in different parts
of the space, forcing the optimal critic to have this behaviour.

\begin{figure}[ht]
    \centering
    \includegraphics[scale=0.5]{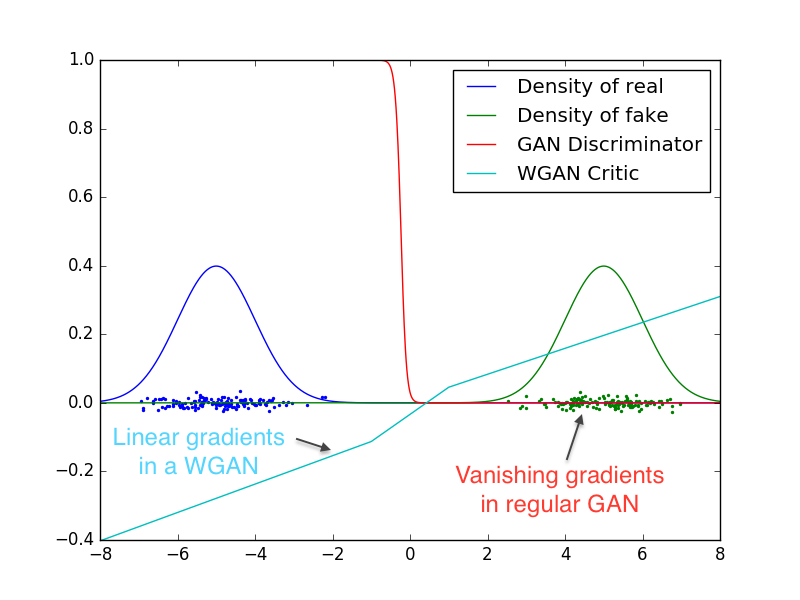}
    \caption{Optimal discriminator and critic when
learning to differentiate two Gaussians. As we can see,
the discriminator of a minimax GAN saturates and results
in vanishing gradients. Our WGAN critic provides very clean
gradients on all parts of the space.}
    \label{fig::optdiscs}
\end{figure}

Perhaps more importantly, the fact that we can train the critic
till optimality makes it impossible to collapse modes when we do.
This is due to the fact that mode collapse comes from the fact
that the optimal generator for a \emph{fixed} discriminator
is a sum of deltas on the points the discriminator assigns
the highest values, as observed by \cite{Good-ea-GAN} and
highlighted in \cite{Metz-ea-UG}.

In the following section we display the practical benefits
of our new algorithm, and we provide an in-depth comparison
of its behaviour and that of traditional GANs.

\section{Empirical Results}
\label{sec-experiments}

  We run experiments on image generation using our Wasserstein-GAN algorithm
and show that there are significant practical benefits to using it over the 
formulation used in standard GANs.

\medskip
\noindent
We claim two main benefits:
\begin{itemize}
\item a meaningful loss metric that correlates with the generator's 
convergence and sample quality
\item improved stability of the optimization process
\end{itemize}

\subsection{Experimental Procedure}

We run experiments on image generation. The target distribution to learn is the 
LSUN-Bedrooms dataset \cite{LSUN} -- a collection of natural images of
indoor bedrooms. Our baseline comparison is DCGAN \cite{Rad-ea-DCGAN},
a GAN with a convolutional architecture trained with the standard GAN procedure
using the $-\log D$ trick \cite{Good-ea-GAN}.
The generated samples are 3-channel images of 64x64 pixels in size.
We use the hyper-parameters specified in Algorithm \autoref{algo::wgan} for 
all of our experiments.

\subsection{Meaningful loss metric}

Because the WGAN algorithm attempts to train the critic $f$ (lines 2--8 in Algorithm~\ref{algo::wgan})
relatively well before each generator update (line 10 in Algorithm \autoref{algo::wgan}),
the loss function at this point is an estimate of the EM distance, up to constant
factors related to the way we constrain the Lipschitz constant of $f$.

Our first experiment illustrates how this estimate correlates well
with the quality of the generated samples. Besides the convolutional
DCGAN architecture, we also ran experiments where we replace the
generator or both the generator and the critic by 4-layer
ReLU-MLP with 512 hidden units.

\begin{figure}[t!]
  \centering
  \begin{minipage}[b]{0.5\linewidth}
    \centering
    \includegraphics[scale=0.3]{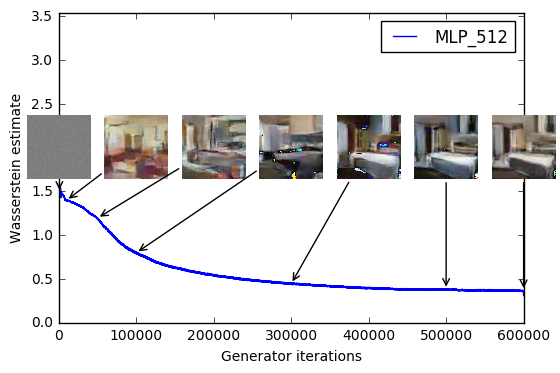}
  \end{minipage}
  \begin{minipage}[b]{0.5\linewidth}
    \centering
    \includegraphics[scale=0.3]{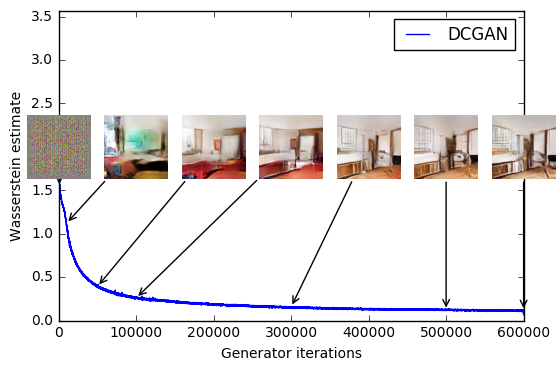}
  \end{minipage}
  \begin{minipage}[b]{0.5\linewidth}
    \centering
    \includegraphics[scale=0.3]{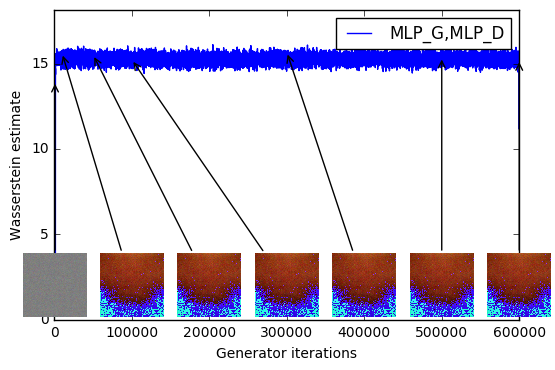}
  \end{minipage}
  \caption{Training curves and samples at different stages
of training. We can see a clear correlation between lower
error and better sample quality.
Upper left: the generator is an MLP with 4 hidden
layers and 512 units at each layer. The loss decreases
constistently as training progresses and sample quality
increases. Upper right: the generator is a standard DCGAN.
The loss decreases quickly and sample quality increases
as well. In both upper plots the critic is a
DCGAN without the sigmoid so losses can be subjected to comparison.
Lower half: both the generator and the discriminator are MLPs
with substantially high learning rates (so training failed). Loss
is constant and samples are constant as well. The training
curves were passed through a median filter for visualization
purposes.}
  \label{fig::corr}
\end{figure}

\begin{figure}[h]
  \centering
  \begin{minipage}[b]{0.5\linewidth}
    \centering
    \includegraphics[scale=0.3]{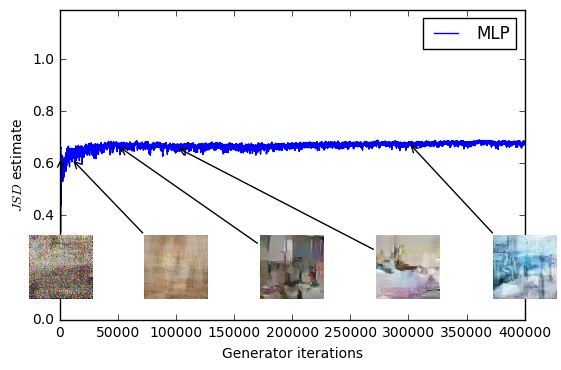}
  \end{minipage}
  \begin{minipage}[b]{0.5\linewidth}
    \centering
    \includegraphics[scale=0.3]{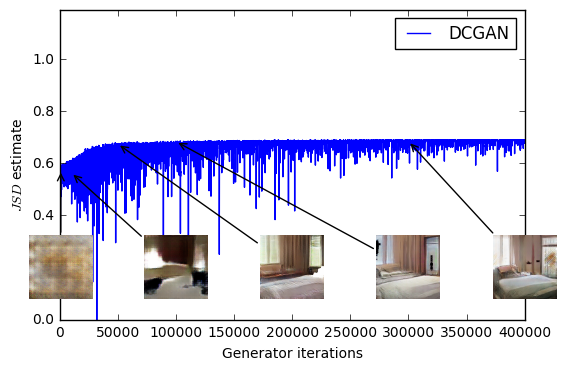}
  \end{minipage}
  \begin{minipage}[b]{0.5\linewidth}
    \centering
    \includegraphics[scale=0.3]{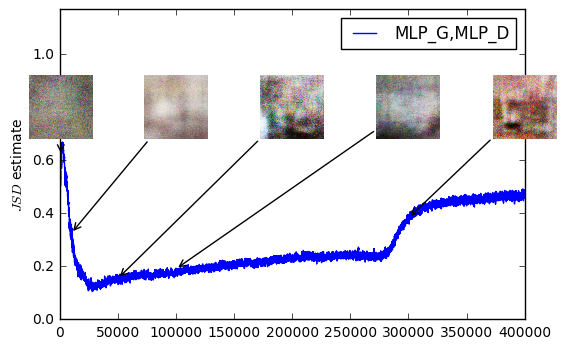}
  \end{minipage}
  \caption{$JS$ estimates for an MLP generator (upper left)
and a DCGAN generator (upper right) trained with the standard
GAN procedure. Both had a DCGAN discriminator. \emph{Both curves
have increasing error}. Samples get better for the DCGAN
but the JS estimate increases or stays constant, pointing towards
no significant correlation between sample quality and loss. Bottom:
$MLP$ with both generator and discriminator. The curve goes up and down
regardless of sample quality.
All training
curves were passed through the same median filter as in \autoref{fig::corr}.}
  \label{fig::gancorr}
\end{figure}

\autoref{fig::corr} plots the evolution of the WGAN estimate
\eqref{eq::wgan} of the EM distance during WGAN training for all three
architectures.  The plots clearly show that these curves correlate
well with the visual quality of the generated samples.

To our knowledge, this is the first time in GAN literature that such a
property is shown, where the loss of the GAN shows properties of
convergence. This property is extremely useful when doing research in
adversarial networks as one does not need to stare at the generated
samples to figure out failure modes and to gain information on which
models are doing better over others.

However, we do not claim that this is a new method to quantitatively
evaluate generative models yet. The constant scaling
factor that depends on the critic's architecture means it's hard
to compare models with different critics. Even more,
in practice the fact that the critic doesn't have
infinite capacity makes it hard to know just how
close to the EM distance our estimate really is.
This being said, we have succesfully used the loss metric to
validate our experiments repeatedly and without failure, and 
we see this as a huge improvement in training GANs which 
previously had no such facility.

In contrast, \autoref{fig::gancorr} plots the evolution of the GAN estimate
of the JS distance during GAN training. More precisely, during GAN training,
the discriminator is trained to maximize
 $$L(D, g_\theta) = \EE_{x \sim \PP_r}[\log D(x)] + \EE_{x \sim \PP_\theta}[\log(1 - D(x))]$$
which is is a lower bound of $2 JS(\PP_r,\PP_\theta) - 2 \log 2$.
In the figure, we plot the quantity $\frac{1}{2} L(D, g_\theta) + \log 2$,
which is a lower bound of the JS distance.

This quantity clearly correlates poorly the sample quality. Note
also that the JS estimate usually stays constant or goes up instead
of going down. In fact
it often remains very close to $\log2\approx0.69$ which is the highest
value taken by the JS distance. In other words, the JS distance
saturates, the discriminator has zero loss, and the generated samples
are in some cases meaningful (DCGAN generator, top right plot) and in
other cases collapse to a single nonsensical image \cite{Good-ea-GAN}.
This last phenomenon has been theoretically explained in
\cite{Arj-ea-Princ} and highlighted in~\cite{Metz-ea-UG}.

When using the $-\log D$ trick \cite{Good-ea-GAN},
the discriminator loss and the generator loss are different.
Figure~\ref{fig::gancorrgencost} in Appendix~\ref{app::gangencost}
reports the same plots for GAN training, but using
the generator loss instead of the discriminator loss.
This does not change the conclusions.

Finally, as a negative result, we report that WGAN training becomes unstable
at times when one uses a momentum based optimizer such as Adam \cite{Kingma-ea-Adam} (with $\beta_1 > 0$)
on the critic, or
when one uses high learning rates. Since the loss for the critic is nonstationary, momentum
based methods seemed to perform worse. We identified
momentum as a potential cause because, as the loss blew up and samples got worse,
the cosine between the Adam step and the gradient usually turned negative. The
only places where this cosine was negative was in these situations of
instability. We therefore switched to RMSProp \cite{Tieleman2012} which is known
to perform well even on very nonstationary problems \cite{Mnih-ea-A3C}.

\subsection{Improved stability}

One of the benefits of WGAN is that it allows us to train the critic
till optimality. When the critic is trained to completion, it simply
provides a loss to the generator that we can train as any other neural
network. This tells us that we no longer need to balance generator and
discriminator's capacity properly. The better the critic, the higher
quality the gradients we use to train the generator.

We observe that WGANs are much more robust than GANs when one varies
the architectural choices for the generator. We illustrate this
by running experiments on three generator architectures:
(1) a convolutional DCGAN generator, (2) a convolutional DCGAN generator
without batch normalization and with a constant number of filters,
and (3) a 4-layer ReLU-MLP with 512 hidden units.
The last two are known to perform very poorly with GANs.
We keep the convolutional DCGAN architecture for
the WGAN critic or the GAN discriminator.

\begin{figure}[t]
    \centering  
    \includegraphics[scale=0.3195]{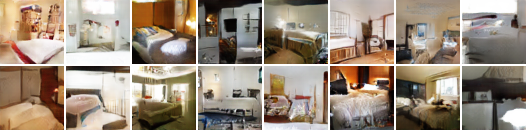}
    \includegraphics[scale=0.327168]{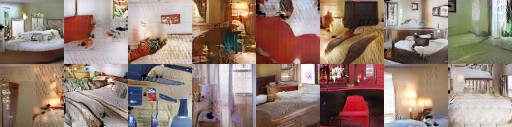}
    \caption{Algorithms trained with a DCGAN generator. Left: WGAN algorithm. Right:
standard GAN formulation. Both algorithms produce high quality samples.}
    \label{fig::dcgan}
\par\bigskip    
    \centering  
    \includegraphics[scale=0.28275]{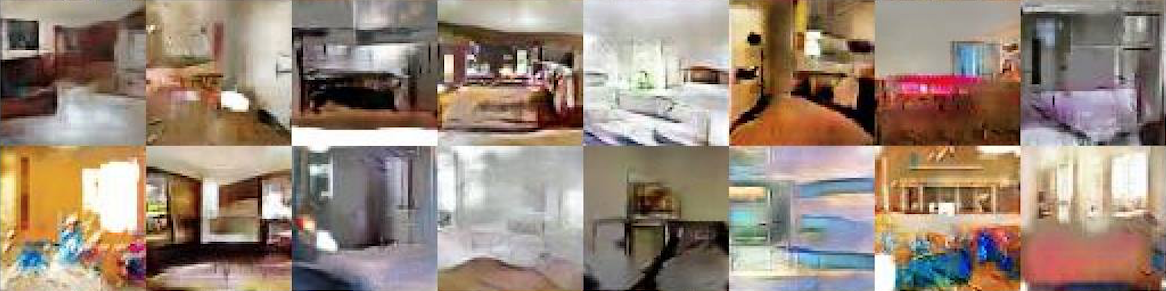}
    \includegraphics[scale=0.3315]{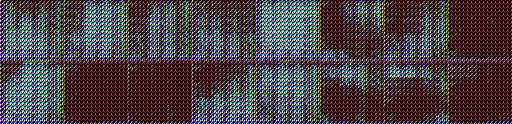}
  \caption{Algorithms trained with a generator without batch normalization
and constant number of filters at every layer
(as opposed to duplicating them every time as in \cite{Rad-ea-DCGAN}). Aside
from taking out batch normalization, the number of parameters is therefore
reduced by a bit more than an order of magnitude. 
Left: WGAN algorithm. Right: standard GAN formulation. As we can see
the standard GAN failed to learn while the WGAN still was able
to produce samples.}
    \label{fig::nobn}
\par\bigskip    
    \centering  
    \includegraphics[scale=0.315]{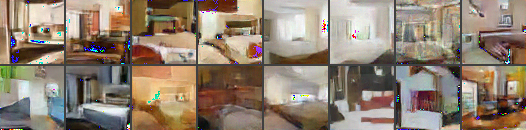}
    \includegraphics[scale=0.322875]{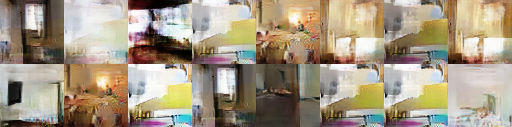}
    \caption{Algorithms trained with an MLP generator with 4 layers and
512 units with ReLU nonlinearities. The number of parameters
is similar to that of a DCGAN, but it lacks a strong inductive bias
for image generation. Left: WGAN algorithm. Right: standard GAN formulation.
The WGAN method still was able to produce samples, lower quality than the DCGAN,
and of higher quality than the MLP of the standard GAN. Note the significant
degree of mode collapse in the GAN MLP.}
    \label{fig::mlp}
\par\bigskip
\end{figure}

Figures~\ref{fig::dcgan}, \ref{fig::nobn}, and \ref{fig::mlp} show
samples generated for these three architectures using both the WGAN
and GAN algorithms. We refer the reader to Appendix
\autoref{app::sheets} for full sheets of generated samples.
Samples were not cherry-picked.

\textbf{In no experiment did we see evidence of mode collapse for the WGAN algorithm.}

\section{Related Work}
\label{sec-related}
There's been a number of works on the so called
Integral Probability Metrics (IPMs) \cite{Muller-IPM}.
Given $\manF$ a set of functions from $\mathcal{X}$
to $\RR$, we can define
\begin{equation} \label{eq::IPM}
  d_\manF(\PP_r, \PP_\theta) = \sup_{f \in \manF} \EE_{x \sim \PP_r}[f(x)] - \EE_{x \sim \PP_\theta}[f(x)]
\end{equation}
as an integral probability metric associated with the
function class $\manF$. It is easily verified that
if for every $f \in \manF$ we have $-f \in \manF$
(such as all examples we'll consider), then
$d_\manF$ is nonnegative, satisfies the triangular
inequality, and is symmetric. Thus, $d_\manF$ is
a pseudometric over $\text{Prob}(\manX)$.

While IPMs might seem to share a similar formula,
as we will see different classes of functions
can yeald to radically different metrics.

\begin{itemize}
\item By the Kantorovich-Rubinstein duality \cite{Vil},
we know that $W(\PP_r, \PP_\theta) = d_\manF(\PP_r,
\PP_\theta)$ when $\manF$ is the set of 1-Lipschitz
functions. Furthermore, if $\manF$ is the set
of $K$-Lipschitz functions, we get $K \cdot W(\PP_r,
\PP_\theta) = d_\manF(\PP_r, \PP_\theta)$.

\item When $\manF$ is the set of all measurable
functions bounded between -1 and 1 (or all
continuous functions between -1 and 1), we
retrieve $d_\manF(\PP_r, \PP_\theta) = \delta(\PP_r,
\PP_\theta)$ the total variation distance \cite{Muller-IPM}.
This already tells us that going from 1-Lipschitz
to 1-Bounded functions drastically changes the
topology of the space, and the regularity
of $d_\manF(\PP_r, \PP_\theta)$ as a loss
function (as by Theorems \autoref{theo::regcost}
and \autoref{theo::dist}).

\item Energy-based GANs (EBGANs) \cite{Zhao-ea-EBGAN}
can be thought of
as the generative approach to the total variation
distance. This connection is stated and
proven in depth in Appendix \autoref{app::ebgan}.
At the core of the connection is that the discriminator
will play the role of $f$ maximizing equation
\eqref{eq::IPM} while its only restriction is
being between $0$ and $m$ for
some constant $m$. This will yeald the same
behaviour as being restricted to be between $-1$
and $1$ up to a constant scaling factor irrelevant
to optimization. Thus, when the discriminator
approaches optimality the cost for the generator
will aproximate the total variation distance
$\delta(\PP_r, \PP_\theta)$.

Since the total variation distance displays the
same regularity as the JS, it can be seen that
EBGANs will suffer from the same problems
of classical GANs regarding not being able
to train the discriminator till optimality
and thus limiting itself to very imperfect
gradients.

\item Maximum Mean Discrepancy (MMD) \cite{Gretton-ea-MMD} is
a specific case of integral probability metrics when
$\manF = \{f \in \mathcal{H}: \|f\|_\infty \leq 1\}$ for
$\mathcal{H}$ some Reproducing Kernel Hilbert Space (RKHS)
associated with a given kernel $k: \manX \times \manX \rightarrow \RR$.
As proved on \cite{Gretton-ea-MMD} we know that MMD is a proper
metric and not only a pseudometric when the kernel is universal.
In the specific case where
$\mathcal{H} = L^2(\manX, m)$ for $m$ the normalized Lebesgue
measure on $\manX$, we know that $\{f \in C_b(\manX), \|f\|_\infty \leq 1\}$
will be contained in $\manF$, and therefore $d_\manF(\PP_r, \PP_\theta)
\leq \delta(\PP_r, \PP_\theta)$ so the regularity of the MMD distance
as a loss function will be at least as bad as the one of the total
variation. Nevertheless this is a very extreme case, since we would
need a very powerful kernel to approximate the whole $L^2$. However,
even Gaussian kernels are able to detect tiny noise patterns
as recently evidenced by \cite{South-ea-MMD}. This points to the
fact that especially with low bandwidth kernels, the distance
might be close to a saturating regime similar as with total
variation or the JS. This obviously doesn't need to
be the case for every kernel, and figuring out how and which different MMDs
are closer to Wasserstein or total variation distances is an interesting
topic of research.

The great aspect of MMD is that via the kernel trick there is no need
to train a separate network to maximize equation \eqref{eq::IPM} for the ball
of a RKHS. However, this has the disadvantage that evaluating the MMD distance
has computational cost that grows quadratically with the amount of samples
used to estimate the expectations in \eqref{eq::IPM}. This last point
makes MMD have limited scalability, and is sometimes inapplicable to
many real life applications because of it. There are estimates with
linear computational cost for the MMD \cite{Gretton-ea-MMD} which
in a lot of cases makes MMD very useful, but they also have worse sample complexity.

\item Generative Moment Matching Networks (GMMNs) \cite{Li-ea-GMMN, Dz-ea-GMMD}
are the generative counterpart of MMD. By backproping through
the kernelized formula for equation \eqref{eq::IPM}, they directly
optimize $d_{MMD}(\PP_r, \PP_\theta)$ (the IPM when $\manF$ is
as in the previous item). As mentioned, this has the advantage
of not requiring a separate network to approximately maximize
equation \eqref{eq::IPM}. However, GMMNs have enjoyed limited applicability.
Partial explanations for their unsuccess are the quadratic cost as a function
of the number of samples and vanishing gradients for low-bandwidth kernels.
Furthermore, it may be possible that some
kernels used in practice are unsuitable for capturing very complex
distances in high dimensional sample spaces such as natural images.
This is properly justified by the fact that \cite{Ramdas-ea-hdk}
shows that for the typical Gaussian MMD test to be reliable (as in it's power
as a statistical test approaching 1), we need the number of
samples to grow linearly with the number of dimensions. Since
the MMD computational cost grows quadratically with the number
of samples in the batch used to estimate equation \eqref{eq::IPM},
this makes the cost of having a reliable estimator
grow quadratically with the number of dimensions, which makes it
very inapplicable for high dimensional problems. Indeed, for
something as standard as $64$x$64$ images, we would need minibatches
of size at least $4096$ (without taking into account the constants
in the bounds of \cite{Ramdas-ea-hdk} which would make this number
substantially larger) and a total cost per iteration of
$4096^2$, over 5 orders of magnitude more than a
GAN iteration when using the standard batch size of 64.

That being said, these numbers can be a bit unfair to the MMD,
in the sense that we are comparing empirical sample complexity of
GANs with the theoretical sample complexity of MMDs, which tends
to be worse. However, in the original GMMN paper \cite{Li-ea-GMMN} they
indeed used a minibatch of size 1000, much larger than the standard
32 or 64 (even when this incurred in quadratic computational cost).
While estimates that have linear computational cost as a function of the number
of samples exist \cite{Gretton-ea-MMD},
they have worse sample complexity, and to the best of
our knowledge they haven't been yet applied in a generative context
such as in GMMNs.

\end{itemize}

On another great line of research, the recent work of \cite{Mont-ea-WRBM}
has explored the use of Wasserstein distances in the context of learning
for Restricted Boltzmann Machines for discrete spaces. The motivations
at a first glance might seem quite different, since the manifold setting is restricted
to continuous spaces and in finite discrete spaces the weak and strong
topologies (the ones of W and JS respectively) coincide. However, in the
end there is more in commmon than not about our motivations. We both
want to compare distributions in a way that leverages the geometry of the
underlying space, and Wasserstein allows us to do exactly that.

Finally, the work of \cite{Gen-ea-LSOpt} shows new algorithms for calculating
Wasserstein distances between different distributions. We believe this direction
is quite important, and perhaps could lead to new ways of evaluating generative models.

\section{Conclusion}
  We introduced an algorithm that we deemed WGAN, an alternative
to traditional GAN training. In this new model, we showed that we can
improve the stability of learning, get rid of problems like mode collapse,
and provide meaningful learning curves useful for debugging and hyperparameter
searches. Furthermore, we showed that the corresponding optimization problem
is sound, and provided extensive theoretical work highlighting the
deep connections to other distances between distributions.

\section*{Acknowledgments}
We would like to thank
Mohamed Ishmael Belghazi,
Emily Denton,
Ian Goodfellow,
Ishaan Gulrajani,
Alex Lamb,
David Lopez-Paz,
Eric Martin,
Maxime Oquab,
Aditya Ramesh,
Ronan Riochet,
Uri Shalit,
Pablo Sprechmann,
Arthur Szlam,
Ruohan Wang,
for helpful comments and advice.

\bibliography{arxiv}
\bibliographystyle{plain}

\clearpage

\begin{appendices}
\section{Why Wasserstein is indeed weak} \label{app::weak}
We now introduce our notation. Let $\mathcal{X} \subseteq{\RR^d}$
be a compact set (such as $[0,1]^d$ the space of images). We define
$\text{Prob}(\manX)$ to be the space of probability measures over
$\manX$. We note
$$C_b(\manX) = \{f: \manX \to \RR \text{, $f$ is continuous and bounded}\}$$
Note that if $f \in C_b(\manX)$,
we can define $\|f\|_\infty = \max_{x \in \manX} |f(x)|$, since
$f$ is bounded. With this norm, the space $(C_b(\manX), \|\cdot\|_\infty)$
is a normed vector space. As for any normed vector space, we can define
its dual
$$ C_b(\manX)^* = \{\phi: C_b(\manX) \rightarrow \RR \text{, $\phi$ is linear
and continuous} \} $$
and give it the dual norm $\|\phi\| = \sup_{f \in C_b(\manX), \|f\|_\infty \leq 1} |\phi(f)|$.

With this definitions, $(C_b(\manX)^*, \|\cdot\|)$ is another normed space.
Now let $\mu$ be a signed measure over $\manX$, and let us define
the total variation distance
$$\|\mu\|_{TV} = \sup_{A \subseteq \manX} |\mu(A)|$$
where the supremum is taken all Borel sets in $\manX$.
Since the total variation is a norm, then if we have $\PP_r$
and $\PP_\theta$ two probability distributions over $\manX$,
$$\delta(\PP_r, \PP_\theta) := \|\PP_r - \PP_\theta\|_{TV}$$
is a distance in $\text{Prob}(\manX)$ (called the total variation
distance).

We can consider
$$ \Phi: (\text{Prob}(\manX), \delta) \rightarrow (C_b(\manX)^*, \|\cdot \|)$$
where $\Phi(\PP)(f) := \EE_{x \sim \PP}[f(x)]$ is a linear function
over $C_b(\manX)$. The Riesz Representation theorem (\cite{Kakutani-Riesz},
Theorem 10) tells us that $\Phi$ is an isometric immersion. This
tells us that we can effectively consider $\text{Prob}(\manX)$ with
the total variation distance as a subset of $C_b(\manX)^*$ with
the norm distance. Thus, just to accentuate it one more time,
the total variation over $\text{Prob}(\manX)$ is exactly
the norm distance over $C_b(\manX)^*$.

Let us stop for a second and analyze what all this technicality meant.
The main thing to carry is that we introduced a distance $\delta$
over probability distributions. When looked as a distance over
a subset of $C_b(\manX)^*$, this distance gives the norm topology.
The norm topology is very strong. Therefore, we can expect that
not many functions $\theta \mapsto \PP_\theta$ will be continuous
when measuring distances between distributions with $\delta$. As
we will show later in Theorem \autoref{theo::dist}, $\delta$ gives the same topology
as the Jensen-Shannon divergence, pointing to the fact that the
JS is a very strong distance, and is thus more propense to
give a discontinuous loss function.

Now, all dual spaces (such as $C_b(\manX)^*$ and thus
$\text{Prob}(\manX)$) have a strong topology (induced by the norm),
and a weak* topology. As the name suggests, the weak* topology
is much weaker than the strong topology. In the case of
$\text{Prob}(\manX)$, the strong topology is given by the
total variation distance, and the weak* topology is given
by the Wasserstein distance (among others) \cite{Vil}.
\section{Assumption definitions}

\begin{ass} \label{ass::lip}
Let $g: \mathcal{Z} \times \RR^d \rightarrow \mathcal{X}$ be
locally Lipschitz between finite dimensional vector spaces.
We will denote $g_\theta(z)$ it's evaluation on coordinates
$(z, \theta)$. We say that $g$ satisfies assumption \ref{ass::lip}
for a certain probability distribution $p$ over $\mathcal{Z}$
if there are local Lipschitz constants $L(\theta, z)$ such
that
$$ \EE_{z \sim p}[L(\theta, z)] < +\infty $$ 
\end{ass}

\section{Proofs of things} \label{app::proofs}

\begin{proof}[Proof of Theorem \ref{theo::regcost}]
Let $\theta$ and $\theta'$ be two parameter vectors in $\RR^d$. Then, we 
will first attempt to bound $W(\PP_\theta, \PP_{\theta'})$, from where the
theorem will come easily. The main element of the proof is the use of the
coupling $\gamma$, the distribution of the joint $(g_\theta(Z), g_{\theta'}(Z))$,
which clearly has $\gamma \in \Pi(\PP_\theta, \PP_{\theta'})$.

By the definition of the Wasserstein distance, we have
\begin{align*}
  W(\PP_\theta, \PP_{\theta'}) &\leq \int_{\manX \times \manX} \|x - y\| \diff \gamma \\
&= \EE_{(x, y) \sim \gamma} [\|x - y \|] \\
&= \EE_{z}[\|g_\theta(z) - g_{\theta'}(z)\|]
\end{align*}
If $g$ is continuous in $\theta$, then $g_\theta(z) \to_{\theta \to \theta'} g_{\theta'}(z)$,
so $\|g_\theta - g_{\theta'}\| \to 0$ pointwise as functions of $z$. Since
$\manX$ is compact, the distance of any two elements in it has to be uniformly bounded by
some constant $M$, and therefore $\|g_\theta(z) - g_{\theta'}(z)\| \leq M$ for
all $\theta$ and $z$ uniformly. By the bounded convergence theorem, we therefore
have
$$ W(\PP_\theta, \PP_{\theta'}) \leq \EE_{z}[\|g_\theta(z) - g_{\theta'}(z)\|] \to_{\theta \to \theta'} 0 $$
Finally, we have that
$$ |W(\PP_r, \PP_\theta) - W(\PP_r, \PP_{\theta'}) | \leq W(\PP_\theta, \PP_{\theta'}) \to_{\theta \to \theta'} 0 $$
proving the continuity of $W(\PP_r, \PP_\theta)$.

Now let $g$ be locally Lipschitz. Then, for a given
pair $(\theta, z)$ there is a constant $L(\theta, z)$
and an open set $U$ such that $(\theta, z) \in U$,
such that for every $(\theta', z') \in U$ we have
$$ \|g_\theta(z) - g_\theta'(z')\| \leq L(\theta, z) (\|\theta - \theta'\| + \|z - z'\|) $$

By taking expectations and $z'=z$ we
$$ \EE_z[\|g_\theta(z) - g_{\theta'}(z)\|] 
  \leq \|\theta - \theta'\| \EE_z[L(\theta, z)] $$
whenever $(\theta', z) \in U$. Therefore, we can define
$U_\theta = \{\theta' | (\theta', z) \in U\}$. It's easy
to see that since $U$ was open, $U_\theta$ is as well.
Furthermore, by assumption \ref{ass::lip}, we can
define $L(\theta) = \EE_z[L(\theta, z)]$ and achieve
$$ |W(\PP_r, \PP_\theta) - W(\PP_r, \PP_{\theta'})| 
\leq W(\PP_\theta, \PP_{\theta'}) 
\leq L(\theta) \|\theta - \theta'\| $$
for all $\theta' \in U_\theta$, meaning that $W(\PP_r, \PP_\theta)$
is locally Lipschitz. This obviously implies that
$W(\PP_r, \PP_\theta)$ is everywhere continuous, and
by Radamacher's theorem we know it has to be differentiable
almost everywhere.

The counterexample for item 3 of the Theorem is indeed
Example \autoref{exa::lines}.
\end{proof}

\begin{proof}[Proof of Corollary \autoref{coro::nnregcost}]
We begin with the case of smooth nonlinearities. Since $g$ is
$C^1$ as a function of $(\theta, z)$ then for any
fixed $(\theta, z)$ we have $L(\theta, Z) \leq \|\nabla_{\theta, x}g_\theta(z)\| + \epsilon$
is an acceptable local Lipschitz constant for all $\epsilon > 0$.
Therefore, it suffices to prove
$$\EE_{z \sim p(z)}[\|\nabla_{\theta, z} g_\theta(z)\|]< +\infty$$
If $H$ is the number of layers we know
that $\nabla_z g_\theta(z) = \prod_{k=1}^H W_k D_k$ where
$W_k$ are the weight matrices and $D_k$ is are the diagonal Jacobians
of the nonlinearities. Let $f_{i:j}$ be the application
of layers $i$ to $j$ inclusively (e.g. $g_\theta = f_{1:H}$).
Then, $\nabla_{W_k} g_\theta(z) = \left(\left(\prod_{i=k+1}^H W_i D_i
\right) D_k \right) f_{1:k-1}(z)$.
We recall that if $L$ is the Lipschitz constant
of the nonlinearity, then $\|D_i\|\leq L$ and
$\|f_{1:k-1}(z)\| \leq \|z\| L^{k-1} \prod_{i=1}^{k-1}W_i$. Putting this together,
\begin{align*}
  \|\nabla_{z, \theta} g_\theta(z)\| &\leq \|\prod_{i=1}^H W_i D_i\|
+ \sum_{k=1}^H \|\left(\left(\prod_{i=k+1}^H W_i D_i \right) D_k \right) 
f_{1:k-1}(z)\| \\
  &\leq L^H \prod_{i=H}^K \|W_i\| + \sum_{k=1}^H \|z\| L^H 
\left(\prod_{i=1}^{k-1} \|W_i\| \right)
\left(\prod_{i=k+1}^{H} \|W_i\| \right)
\end{align*}
If $C_1(\theta) = L^H\left(\prod_{i=1}^H\|W_i\|\right)$ and
$C_2(\theta) = \sum_{k=1}^H L^H 
\left(\prod_{i=1}^{k-1} \|W_i\| \right)
\left(\prod_{i=k+1}^{H} \|W_i\| \right)$ then
$$\EE_{z \sim p(z)}[\|\nabla_{\theta, z} g_\theta(z)\|] 
\leq C_1(\theta) + C_2(\theta) \EE_{z \sim p(z)}[\|z\|] < +\infty$$
finishing the proof
\end{proof}

\begin{proof}[Proof of Theorem \ref{theo::dist}] \mbox{~}\par
  \begin{enumerate}
    \item
      \begin{itemize}
      \item ($\delta(\PP_n, \PP) \to 0 \Rightarrow JS(\PP_n,\PP) \to 0$) \quad --- \quad
        Let $\PP_m$ be the mixture distribution $\PP_m = \frac{1}{2} \PP_n
        + \frac{1}{2} \PP$ (note that $\PP_m$ depends on $n$).
        It is easily verified that $\delta(\PP_m, \PP_n)
        \leq \delta(\PP_n, \PP)$, and in particular this tends to 0 (as
        does $\delta(\PP_m, \PP)$). We now show this for completeness.
        Let $\mu$ be a signed measure,
        we define $\|\mu\|_{TV} = \sup_{A \subseteq \mathcal{X}} |\mu(A)|$.
        for all Borel sets $A$.
        In this case,
        \begin{align*}
          \delta(\PP_m, \PP_n) &= \| \PP_m - \PP_n \|_{TV} \\
          &= \| \frac{1}{2} \PP + \frac{1}{2} \PP_n - \PP_n \|_{TV} \\
          &= \frac{1}{2} \| \PP - \PP_n \|_{TV} \\
          &= \frac{1}{2} \delta(\PP_n, \PP) \leq \delta(\PP_n, \PP)
        \end{align*}
        
        Let $f_n = \frac{d \PP_n}{d \PP_m}$ be the Radon-Nykodim
        derivative between $\PP_n$ and the mixture. Note that by
        construction for every Borel set $A$ we have $\PP_n(A) \leq 
        2 \PP_m(A)$. If $A = \{f_n > 3\}$ then we get
        $$ \PP_n(A) = \int_A f_n \d \PP_m \geq 3 \PP_m(A)$$
        which implies $\PP_m(A) = 0$. This means that $f_n$
        is bounded by 3 $\PP_m$(and therefore $\PP_n$ and
        $\PP$)-almost everywhere. We could have done this
        for any constant larger than 2 but for our
        purposes 3 will sufice.
        
        Let $\epsilon > 0$ fixed,
        and $A_n = \{f_n > 1 + \epsilon\}$. Then, 
        $$ \PP_n(A_n) = \int_{A_n} f_n \d \PP_m \geq (1 + \epsilon) \PP_m(A_n)$$
        Therefore,
        \begin{align*}
          \epsilon \PP_m(A_n) &\leq \PP_n(A_n) - \PP_m(A_n) \\
          &\leq |\PP_n(A_n) - \PP_m(A_n)| \\
          &\leq \delta(\PP_n, \PP_m) \\
          &\leq \delta(\PP_n, \PP).
        \end{align*}
        Which implies $\PP_m(A_m) \leq \frac{1}{\epsilon} \delta(\PP_n, \PP)$. Furthermore,
        \begin{align*}
          \PP_n(A_n) &\leq \PP_m(A_n) + |\PP_n(A_n) - \PP_m(A_n)| \\
          &\leq \frac{1}{\epsilon} \delta(\PP_n, \PP) + \delta(\PP_n, \PP_m) \\
          &\leq \frac{1}{\epsilon} \delta(\PP_n, \PP) + \delta(\PP_n, \PP) \\
          &\leq \left(\frac{1}{\epsilon} + 1\right) \delta(\PP_n, \PP)
        \end{align*}
        We now can see that
        \begin{align*}
          KL(\PP_n \| \PP_m) &= \int \log(f_n) \d \PP_n \\
          &\leq \log(1 + \epsilon) + \int_{A_n} \log(f_n) \d \PP_n \\
          &\leq \log(1 + \epsilon) + \log(3) \PP_n(A_n) \\
          &\leq \log(1 + \epsilon) + \log(3) \left(\frac{1}{\epsilon} + 1\right) \delta(\PP_n, \PP)
        \end{align*}
        Taking limsup we get $0 \leq \limsup KL(\PP_n \| \PP_m) \leq \log(1 + \epsilon)$
        for all $\epsilon > 0$, which means $KL(\PP_n \| \PP_m) \to 0$.
        
        In the same way, we can define $g_n = \frac{d \PP}{d \PP_m}$, and
        $$2 \PP_m(\{g_n > 3\}) \geq \PP(\{g_n > 3\}) \geq 3 \PP_m(\{g_n > 3\}) $$
        meaning that $\PP_m(\{g_n > 3\}) = 0$ and therefore
        $g_n$ is bounded by 3 almost everywhere for $\PP_n, \PP_m$
        and $\PP$. With the same calculation, $B_n = \{g_n > 1 + \epsilon\}$ and
        $$\PP(B_n) = \int_{B_n} g_n \d \PP_m \geq (1 + \epsilon) \PP_m(B_n) $$
        so $\PP_m(B_n) \leq \frac{1}{\epsilon} \delta(\PP, \PP_m) \to 0$, and therefore
        $\PP(B_n) \to 0$. We can now show
        \begin{align*}
          KL(\PP \| \PP_m) &= \int \log(g_n) \d \PP \\
          &\leq \log(1 + \epsilon) + \int_{B_n} \log (g_n) \d \PP \\
          &\leq \log(1 + \epsilon) + \log(3) \PP(B_n)
        \end{align*}
        so we achieve $ 0 \leq \limsup KL(\PP \| \PP_m) \leq \log(1 + \epsilon)$
        and then $KL(\PP \| \PP_m) \to 0$. Finally, we conclude
        $$JS(\PP_n,\PP) = \frac{1}{2} KL(\PP_n \| \PP_m) + \frac{1}{2} KL(\PP \| \PP_m) \to 0 $$
        
      \item ($JS(\PP_n,\PP) \to 0 \Rightarrow \delta(\PP_n, \PP) \to 0$) \quad --- \quad
        by a simple
        application of the triangular and Pinsker's inequalities we get
        \begin{align*}
          \delta(\PP_n, \PP) &\leq \delta(\PP_n, \PP_m) + \delta(\PP, \PP_m) \\
          &\leq \sqrt{\frac{1}{2} KL(\PP_n \| \PP_m)} + \sqrt{\frac{1}{2} KL(\PP \| \PP_m)} \\
          &\leq 2 \sqrt{JS(\PP_n,\PP)} \to 0
        \end{align*}
      \end{itemize}
    \item
      This is a long known fact that $W$ metrizes
      the weak* topology of $(C(\manX), \|\cdot \|_\infty)$
      on $\text{Prob}(\manX)$, and by definition this
      is the topology of convergence in distribution.
      A proof of this can be found (for example) in \cite{Vil}.
    \item
      This is a straightforward application of Pinsker's inequality
      \begin{align*}
        \delta(\PP_n, \PP) \leq \sqrt{\frac{1}{2} KL(\PP_n \| \PP)} \to 0 \\
        \delta(\PP, \PP_n) \leq \sqrt{\frac{1}{2} KL(\PP \| \PP_n)} \to 0 \\
      \end{align*}
      \vspace{-8mm}
    \item
      This is trivial by recalling the fact that $\delta$ and $W$ give the
      strong and weak* topologies on the dual of $(C(\manX), \|\cdot \|_\infty)$
      when restricted to $\text{Prob}(\manX)$.
  \end{enumerate}
\end{proof}

\begin{proof}[Proof of Theorem \autoref{theo::gradW}]
  Let us define
  \begin{align*}
    V(\tilde f, \theta) &= \EE_{x \sim \PP_r}[\tilde f(x)] - \EE_{x \sim \PP_\theta} [\tilde f(x)] \\
    &= \EE_{x \sim \PP_r}[\tilde f(x)] - \EE_{z \sim p(z)} [\tilde f(g_\theta(z))]
  \end{align*}
  where $\tilde f$ lies in $\manF = \{\tilde f: \manX \to \RR \text{ , $\tilde f \in C_b(\manX)$, $\|\tilde f\|_L \leq 1$}\}$ and
  $\theta \in \RR^d$.
  
  Since $\manX$ is compact, we know 
  by the Kantorovich-Rubenstein duality \cite{Vil} that
  there is an $f \in \manF$ that attains the value
  \begin{equation*}
    W(\PP_r, \PP_\theta) = \sup_{\tilde f \in \manF} V(\tilde f, \theta) = V(f, \theta)
  \end{equation*}
  Let us define $X^*(\theta) = \{f \in \manF: V(f, \theta) = W(\PP_r, \PP_\theta)\}$. By
  the above point we know then that $X^*(\theta)$ is non-empty. We know
  that by a simple envelope theorem (\cite{Milgrom-ea-envelope}, Theorem 1) that
  $$ \nabla_\theta W(\PP_r, \PP_\theta) = \nabla_\theta V(f, \theta) $$
  for any $f \in X^*(\theta)$ when both terms are well-defined.
  
  Let $f \in X^*(\theta)$, which we knows exists
  since $X^*(\theta)$ is non-empty for all $\theta$. Then, we get 
  \begin{align*}
    \nabla_\theta W(\PP_r, \PP_\theta) &= \nabla_\theta V(f, \theta) \\
    &= \nabla_\theta[ \EE_{x \sim \PP_r}[f(x)] - \EE_{z \sim p(z)}[f(g_\theta(z))] \\
      &= -\nabla_\theta \EE_{z \sim p(z)}[f(g_\theta(z))]
  \end{align*}
  under the condition that the first and last terms are well-defined.
  The rest of the proof will be dedicated to show that
  \begin{equation} \label{eq::difsign}
    -\nabla_\theta \EE_{z \sim p(z)}[f(g_\theta(z))] = -\EE_{z \sim p(z)}[\nabla_\theta f(g_\theta(z))]
  \end{equation}
  when the right hand side is defined. For the reader who is
  not interested in such technicalities, he or she can skip the
  rest of the proof.
  
  Since $f \in \manF$, we know that it is 1-Lipschitz.
  Furthermore, $g_\theta(z)$ is locally
  Lipschitz as a function of $(\theta, z)$. Therefore, 
  $f(g_\theta(z))$ is locally Lipschitz on $(\theta, z)$
  with constants $L(\theta, z)$ (the same ones as $g$).
  By Radamacher's Theorem, $f(g_\theta(z))$ has to be
  differentiable almost everywhere for $(\theta, z)$
  jointly. Rewriting this, the set $A = \{(\theta, z):
  \text{$f \circ g$ is not differentiable}\}$ has
  measure 0. By Fubini's Theorem, this implies that
  for almost every $\theta$ the section $A_\theta
  = \{z: (\theta, z) \in A\}$ has measure 0.
  Let's now fix a $\theta_0$ such that 
  the measure of $A_{\theta_0}$ is null (\textbf{such
    as when the right hand side of equation \eqref{eq::difsign}
    is well defined}). For this
  $\theta_0$ we have $\nabla_\theta f(g_\theta(z))|_{\theta_0}$
  is well-defined for almost any $z$, and since $p(z)$
  has a density, it is defined $p(z)$-a.e. By assumption
  \ref{ass::lip} we know that
  $$ \EE_{z \sim p(z)} [\|\nabla_\theta f(g_\theta(z))|_{\theta_0}\|]
  \leq \EE_{z \sim p(z)} [L(\theta_0, z)] < + \infty $$
  so $\EE_{z \sim p(z)} [\nabla_\theta f(g_\theta(z))|_{\theta_0}]$
  is well-defined for almost every $\theta_0$. Now, we can see
  \begin{equation}
    \frac{\EE_{z\sim p(z)}[f(g_\theta(z))] - \EE_{z \sim p(z)}[f(g_{\theta_0}(z))]
      - \langle (\theta - \theta_0), \EE_{z \sim p(z)} [\nabla_\theta f(g_\theta(z))|_{\theta_0}]
      \rangle}{\|\theta - \theta_0\|} \label{eq::diffquot}
  \end{equation}
  \begin{align*}
    &= \EE_{z \sim p(z)}\left[ \frac{f(g_\theta(z)) - f(g_{\theta_0}(z))
        - \langle(\theta - \theta_0), \nabla_\theta f(g_\theta(z))|_{\theta_0}
        \rangle}{\|\theta - \theta_0\|}\right] \\
  \end{align*}
  By differentiability, the term inside the integral converges $p(z)$-a.e. to 0
  as $\theta \to \theta_0$. Furthermore,
  \begin{multline*}
    \|\frac{f(g_\theta(z)) - f(g_{\theta_0}(z))
      - \langle(\theta - \theta_0), \nabla_\theta f(g_\theta(z))|_{\theta_0}
      \rangle}{\|\theta - \theta_0\|}\|
    \\ \leq \frac{\|\theta - \theta_0\| L(\theta_0, z)
      + \|\theta - \theta_0\| \| \nabla_\theta f(g_\theta(z))|_{\theta_0}
      \|}{\|\theta - \theta_0\|}\\
    \leq 2 L(\theta_0, z)
  \end{multline*}
  and since $\EE_{z \sim p(z)}[2 L(\theta_0, z)] < +\infty$ by assumption 1,
  we get by dominated convergence that Equation \ref{eq::diffquot} converges
  to 0 as $\theta \to \theta_0$ so
  $$ \nabla_\theta
  \EE_{z \sim p(z)} [f(g_\theta(z))] = \EE_{z \sim p(z)} [\nabla_\theta f(g_\theta(z))]
  $$
  for almost every $\theta$, and in particular when the right
  hand side is well defined.
  Note that the mere existance of the left
  hand side (meaning the differentiability a.e. of $\EE_{z \sim p(z)}
  [f(g_\theta(z))]$) had to be proven, which we just did.
\end{proof}

\clearpage
\section{Energy-based GANs optimize total variation} \label{app::ebgan}
In this appendix we show that under an optimal discriminator,
energy-based GANs (EBGANs) \cite{Zhao-ea-EBGAN} optimize the total variation
distance between the real and generated distributions.

Energy-based GANs are trained in a similar fashion to GANs, only under
a different loss function. They have a discriminator $D$ who tries to
minimize 
\begin{equation*}
L_D(D, g_\theta) = \EE_{x \sim \PP_r}[D(x)] + \EE_{z \sim p(z)}[[m - D(g_\theta(z))]^+]
\end{equation*}
for some $m > 0$ and $[x]^+ = \max(0, x)$ and a generator network $g_\theta$ that's trained to minimize
\begin{equation*}
L_G(D, g_\theta) = \EE_{z \sim p(z)} [D(g_\theta(z))] - \EE_{x \sim \PP_r}[D(x)]
\end{equation*}
Very importantly, $D$ is constrained to be non-negative,
since otherwise the trivial solution for $D$ would be to set everything to
arbitrarily low values. The original EBGAN paper used only $\EE_{z \sim p(z)}[D(g_\theta(z))]$ for
the loss of the generator, but this is obviously equivalent to our
definition since the term $\EE_{x \sim \PP_r}[D(x)]$ does not dependent
on $\theta$ for a fixed discriminator (such as when backproping to the
generator in EBGAN
training) and thus minimizing one or the other is equivalent.

We say that a measurable function $D^*: \mathcal{X} \rightarrow [0, +\infty)$
is optimal for $g_\theta$ (or $\PP_\theta)$ if $L_D(D^*, g_\theta) \leq L_D(D, g_\theta)$ for all
other measurable functions $D$. We show that such a discriminator
always exists for any two distributions $\PP_r$ and $\PP_\theta$,
and that under such a discriminator, $L_G(D^*, g_\theta)$ is
proportional to $\delta(\PP_r, \PP_\theta)$. As a simple corollary,
we get the fact that $L_G(D^*, g_\theta)$ attains its minimum
value if and only if $\delta(\PP_r, \PP_\theta)$ is at its
minimum value, which is 0, and $\PP_r = \PP_\theta$
(Theorems 1-2 of \cite{Zhao-ea-EBGAN}).

\begin{theo} Let $\PP_r$ be a the real data distribution over
a compact space $\mathcal{X}$.
Let $g_\theta: \mathcal{Z} \rightarrow \mathcal{X}$ be a
measurable function (such as any neural network). Then, 
an optimal discriminator $D^*$ exists for $\PP_r$ and
$\PP_\theta$, and 
$$ L_G(D^*, g_\theta) = \frac{m}{2} \delta(\PP_r, \PP_\theta) $$
\end{theo}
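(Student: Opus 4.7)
The plan is to exploit the fact that $L_D(D,g_\theta)$ depends on $D$ only through its pointwise values, so the infinite-dimensional minimization over nonnegative measurable functions reduces to a family of one-dimensional scalar minimizations. First I would introduce a common dominating measure (for instance $\mu = \tfrac{1}{2}(\PP_r + \PP_\theta)$), so that both distributions admit Radon-Nikodym densities $p_r, p_\theta$ with respect to $\mu$. This rewrites
$$ L_D(D,g_\theta) = \int_\manX \bigl( p_r(x)\, D(x) + p_\theta(x)\, [m - D(x)]^+ \bigr)\, d\mu(x), $$
which displays the pointwise structure explicitly.

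Next I would solve the pointwise problem. For each $x$, consider $\phi_x(t) = p_r(x)\,t + p_\theta(x)\,[m - t]^+$ on $t \geq 0$. On $[0,m]$ this is affine with slope $p_r(x) - p_\theta(x)$, while on $[m,\infty)$ it is nondecreasing (slope $p_r(x) \geq 0$), so the infimum is attained at an endpoint: at $t=0$ (value $m\, p_\theta(x)$) when $p_r(x) \geq p_\theta(x)$, and at $t = m$ (value $m\, p_r(x)$) when $p_r(x) < p_\theta(x)$. In both cases the minimum is $m \min(p_r(x), p_\theta(x))$, realized by the indicator
$$D^*(x) = m\cdot \mathbbm{1}\{p_r(x) < p_\theta(x)\}.$$
Since $p_r, p_\theta$ are measurable, $D^*$ is Borel-measurable and nonnegative. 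A standard argument (integrate the pointwise inequality $\phi_x(D^*(x)) \leq \phi_x(D(x))$ against $\mu$) then upgrades pointwise optimality to global optimality among all measurable nonnegative $D$, giving existence of the optimal discriminator.

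Finally I would substitute $D^*$ into $L_G$ and identify the result with the total variation distance. Letting $A = \{p_r < p_\theta\}$,
$$L_G(D^*, g_\theta) = m\,\PP_\theta(A) - m\,\PP_r(A) = m\bigl(\PP_\theta(A) - \PP_r(A)\bigr) = m \int_A (p_\theta - p_r)\, d\mu.$$
A Hahn-decomposition argument shows that the supremum defining $\delta(\PP_r,\PP_\theta) = \sup_B |\PP_r(B) - \PP_\theta(B)|$ is attained at $A$ (equivalently at $A^c$), and the integral $\int_A (p_\theta - p_r)\, d\mu$ equals the signed mass of the positive part of $\PP_\theta - \PP_r$. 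Matching this against the stated form $\tfrac{m}{2}\delta(\PP_r,\PP_\theta)$ pins down the correct normalization.

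The main obstacle is bookkeeping rather than conceptual: one must carefully justify that the pointwise minimizer truly is globally optimal among all measurable nonnegative $D$ (a measurable-selection/Tonelli step that is routine but easy to gloss over), and one must track the factor of two in the TV convention — whether $\delta$ is normalized as $\sup_B |\PP_r(B) - \PP_\theta(B)|$ or as $\int|p_r - p_\theta|\,d\mu$ — so that the constant in front of $\delta(\PP_r, \PP_\theta)$ comes out as claimed. The nontrivial structural fact — that the Bayes-classifier-style discriminator $D^*$ exists, is measurable, and makes $L_G$ collapse onto TV — falls out essentially for free once the pointwise reduction is in place.
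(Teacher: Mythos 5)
Your route is genuinely different from the paper's and it works. The paper first observes that truncating $D$ to $[0,m]$ can only help, then rewrites $L_D$ for $D\in[0,m]$ as $m + \EE_{\PP_r}[D] - \EE_{\PP_\theta}[D]$, shifts and rescales $D$ to a function $f\in[-1,1]$, and finally invokes the Hahn decomposition of $\PP_r - \PP_\theta$ to identify $\inf_{-1\le f\le 1}\EE_{\PP_r}[f]-\EE_{\PP_\theta}[f]$ with $-\delta(\PP_r,\PP_\theta)$. You instead pass to densities against a dominating measure and solve the scalar minimization pointwise, which drops out the Bayes-style indicator $D^*(x)=m\,\mathbbm{1}\{p_r<p_\theta\}$ directly; the Hahn decomposition then only appears at the very end when you identify $\PP_\theta(A)-\PP_r(A)$ with the total variation. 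Your version is arguably cleaner because it gives an explicit closed form for $D^*$ and avoids the affine change of variables; the paper's version is slightly more in the spirit of the IPM framework it is building.

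The one issue you should not leave unresolved is the factor of two, and you were right to flag it. Your computation gives $L_G(D^*,g_\theta)=m\bigl(\PP_\theta(A)-\PP_r(A)\bigr)$, and with the paper's own Section~2 definition $\delta(\PP_r,\PP_\theta)=\sup_{A}|\PP_r(A)-\PP_\theta(A)|$ this is exactly $m\,\delta(\PP_r,\PP_\theta)$, not $\tfrac{m}{2}\delta(\PP_r,\PP_\theta)$. The paper's appendix proof reaches $\tfrac{m}{2}\delta$ only by silently switching conventions: in the Hahn-decomposition step it sets $\|\mu\|_{TV}=\mu(P)-\mu(Q)=|\mu|(\manX)$, which for the difference of two probability measures is \emph{twice} $\sup_A|\mu(A)|$. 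So the discrepancy is not a gap in your argument but an internal inconsistency in the paper's normalization of $\delta$. If you state explicitly which convention you adopt, your derivation is complete; if you adopt the main text's $\sup_A|\cdot|$ convention, the constant in the conclusion should read $m$ rather than $m/2$.
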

\begin{proof} First, we prove that there exists an optimal
discriminator. Let $D: \mathcal{X} \rightarrow [0, +\infty)$
be a measurable function, then $D'(x) := \min(D(x), m)$ is
also a measurable function, and $L_D(D', g_\theta) \leq L_D(D, g_\theta)$.
Therefore, a function $D^*: \mathcal{X} \rightarrow [0, +\infty)$ is
optimal if and only if ${D^*}'$ is. Furthermore, it is optimal if and
only if $L_D(D^*, g_\theta) \leq L_D(D, g_\theta)$ for all $D: \mathcal{X}
\rightarrow [0, m]$. We are then interested to see if there's an
optimal discriminator for the problem
$\min_{0 \leq D(x) \leq m} L_D(D, g_\theta)$.

Note now that if $0 \leq D(x) \leq m$ we have
\begin{align*}
  L_D(D, g_\theta) &= \EE_{x \sim \PP_r}[D(x)] + \EE_{z \sim p(z)}[[m - D(g_\theta(z))]^+] \\
    &= \EE_{x \sim \PP_r}[D(x)] + \EE_{z \sim p(z)}[m - D(g_\theta(z))] \\
    &= m + \EE_{x \sim \PP_r}[D(x)] - \EE_{z \sim p(z)}[D(g_\theta(z))] \\
    &= m + \EE_{x \sim \PP_r}[D(x)] - \EE_{x \sim \PP_\theta}[D(x)]
\end{align*}
Therefore, we know that
\begin{align*}
  \inf_{0 \leq D(x) \leq m} L_D(D, g_\theta) &= m + \inf_{0 \leq D(x) \leq m} \EE_{x \sim \PP_r}[D(x)] - \EE_{x \sim \PP_\theta}[D(x)] \\
    &= m + \inf_{-\frac{m}{2} \leq D(x) \leq \frac{m}{2}} \EE_{x \sim \PP_r}[D(x)] - \EE_{x \sim \PP_\theta}[D(x)] \\
    &= m + \frac{m}{2} \inf_{-1 \leq f(x) \leq 1} \EE_{x \sim \PP_r}[f(x)] - \EE_{x \sim \PP_\theta}[f(x)]
\end{align*}
The interesting part is that
\begin{equation} \label{eq::radon}
\inf_{-1 \leq f(x) \leq 1} \EE_{x \sim \PP_r}[f(x)] - \EE_{x \sim \PP_\theta}[f(x)] = - \delta(\PP_r, \PP_\theta)
\end{equation}
and there is an $f^*: \manX \rightarrow [-1,1]$ such that $\EE_{x \sim \PP_r}[f^*(x)] -
\EE_{x \sim \PP_\theta}[f^*(x)] = - \delta(\PP_r, \PP_\theta)$. This is a long known fact,
found for example in \cite{Vil}, but we prove it later for completeness. In that case,
we define $D^*(x) = \frac{m}{2}f^*(x) + \frac{m}{2}$. We then have $0 \leq D(x) \leq m$ and
\begin{align*}
  L_D(D^*, g_\theta) 
    &= m + \EE_{x \sim \PP_r}[D^*(x)] - \EE_{x \sim \PP_\theta}[D^*(x)] \\
    &= m + \frac{m}{2} \EE_{x \sim \PP_r}[D^*(x)] - \EE_{x \sim \PP_\theta}[f^*(x)] \\
    &= m - \frac{m}{2} \delta(\PP_r, \PP_\theta) \\
    &= \inf_{0 \leq D(x) \leq m} L_D(D, g_\theta)
\end{align*}
This shows that $D^*$ is optimal and $L_D(D^*, g_\theta) = m - \frac{m}{2} \delta(\PP_r, \PP_\theta)$. Furthermore,
\begin{align*}
L_G(D^*, g_\theta) &= \EE_{z \sim p(z)}[D^*(g_\theta(z))] - \EE_{x \sim \PP_r}[D^*(x)] \\
  &= -L_D(D^*, g_\theta) + m \\
  &= \frac{m}{2} \delta(\PP_r, \PP_g)
\end{align*}
concluding the proof.

For completeness, we now show a proof for equation \eqref{eq::radon}
and the existence of said $f^*$ that attains the value of the infimum.
Take $\mu = \PP_r - \PP_\theta$,
which is a signed measure, and $(P, Q)$ its Hahn decomposition.
Then, we can define $f^* := \mathbbm{1}_Q - \mathbbm{1}_P$.
By construction, then
\begin{align*}
EE_{x \sim \PP_r}[f^*(x)] - \EE_{x \sim \PP_\theta}[f^*(x)]
&= \int f^* \d \mu = \mu(Q) - \mu(P) \\
&= -(\mu(P) - \mu(Q)) = -\|\mu\|_{TV} \\
&= -\|\PP_r - \PP_\theta\|_{TV} \\
&= -\delta(\PP_r, \PP_\theta)
\end{align*}
Furthermore, if $f$ is bounded between -1 and 1, we get
\begin{align*}
|\EE_{x \sim \PP_r}[f(x)] - \EE_{x \sim \PP_\theta}[f(x)] |
  &= |\int f \d \PP_r - \int f \d \PP_\theta| \\
  &= |\int f \d \mu| \\
  &\leq \int |f| \d |\mu| \leq \int 1 \d |\mu|\\
  &=|\mu|(\mathcal{X}) = \|\mu\|_{TV} = \delta(\PP_r, \PP_\theta)
\end{align*}
Since $\delta$ is positive, we can conclude $\EE_{x \sim \PP_r}[f(x)] - \EE_{x \sim \PP_\theta}[f(x)] \geq -\delta(\PP_r, \PP_\theta)$.
\end{proof}
\clearpage
\pagebreak

\section{Generator's cost during normal GAN training} \label{app::gangencost}
\begin{figure}[h]
  \centering
  \begin{minipage}[b]{0.5\linewidth}
    \centering
    \includegraphics[scale=0.3]{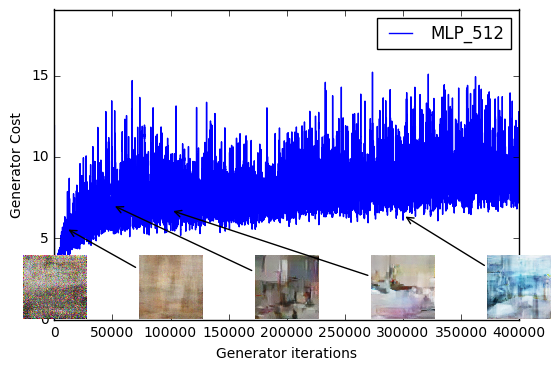}
  \end{minipage}
  \begin{minipage}[b]{0.5\linewidth}
    \centering
    \includegraphics[scale=0.3]{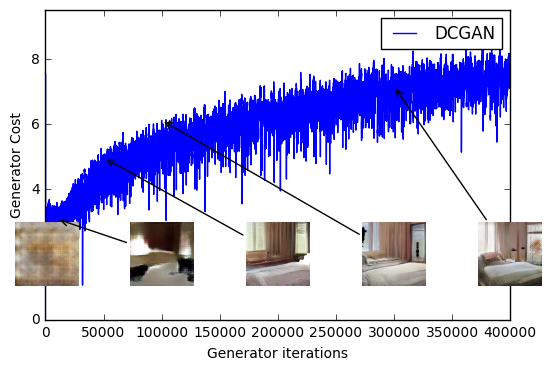}
  \end{minipage}
  \begin{minipage}[b]{0.5\linewidth}
    \centering
    \includegraphics[scale=0.3]{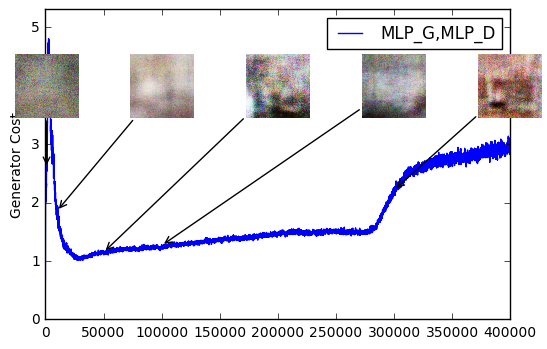}
  \end{minipage}
  \caption{Cost of the generator during normal GAN training,
for an MLP generator (upper left) and a DCGAN generator (upper right).
Both had a DCGAN discriminator. \textbf{Both curves
have increasing error}. Samples get better for the DCGAN
but the cost of the generator increases, pointing towards
no significant correlation between sample quality and loss. Bottom:
$MLP$ with both generator and discriminator. The curve goes
up and down regardless of sample quality. All training
curves were passed through the same median filter as in \autoref{fig::corr}.}
  \label{fig::gancorrgencost}
\end{figure}

\section{Sheets of samples} \label{app::sheets}
\newgeometry{top=0cm, bottom=0cm}

\begin{figure}[h]
  \centering
  \includegraphics[width=.7\linewidth]{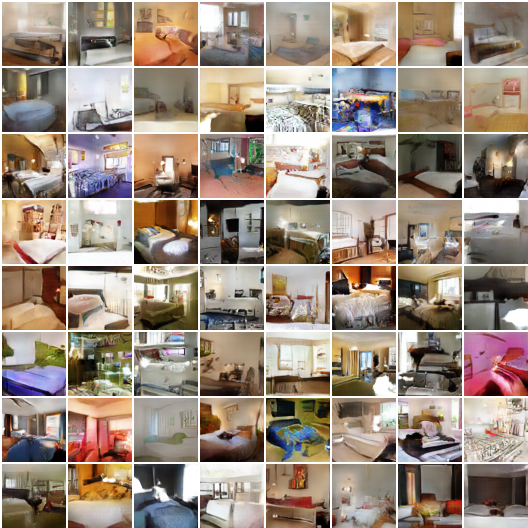}
  \caption{WGAN algorithm: generator and critic
are DCGANs.}
  \bigskip
\includegraphics[width=.7\linewidth]{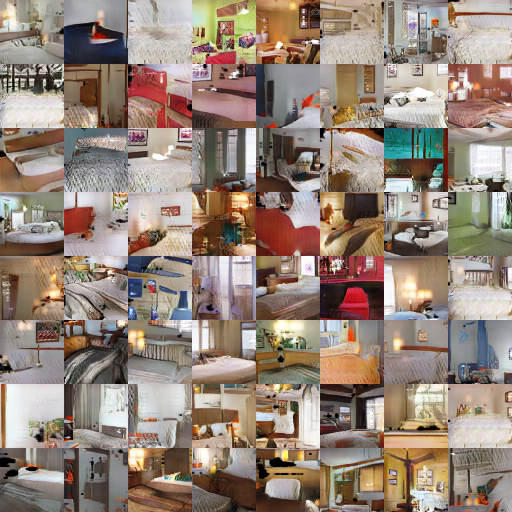}
  \caption{Standard GAN procedure: generator and discriminator
are DCGANs.}
  \label{fig::sheetdcgan}
\end{figure}

\begin{figure}[h]
  \centering
  \includegraphics[width=.7\linewidth]{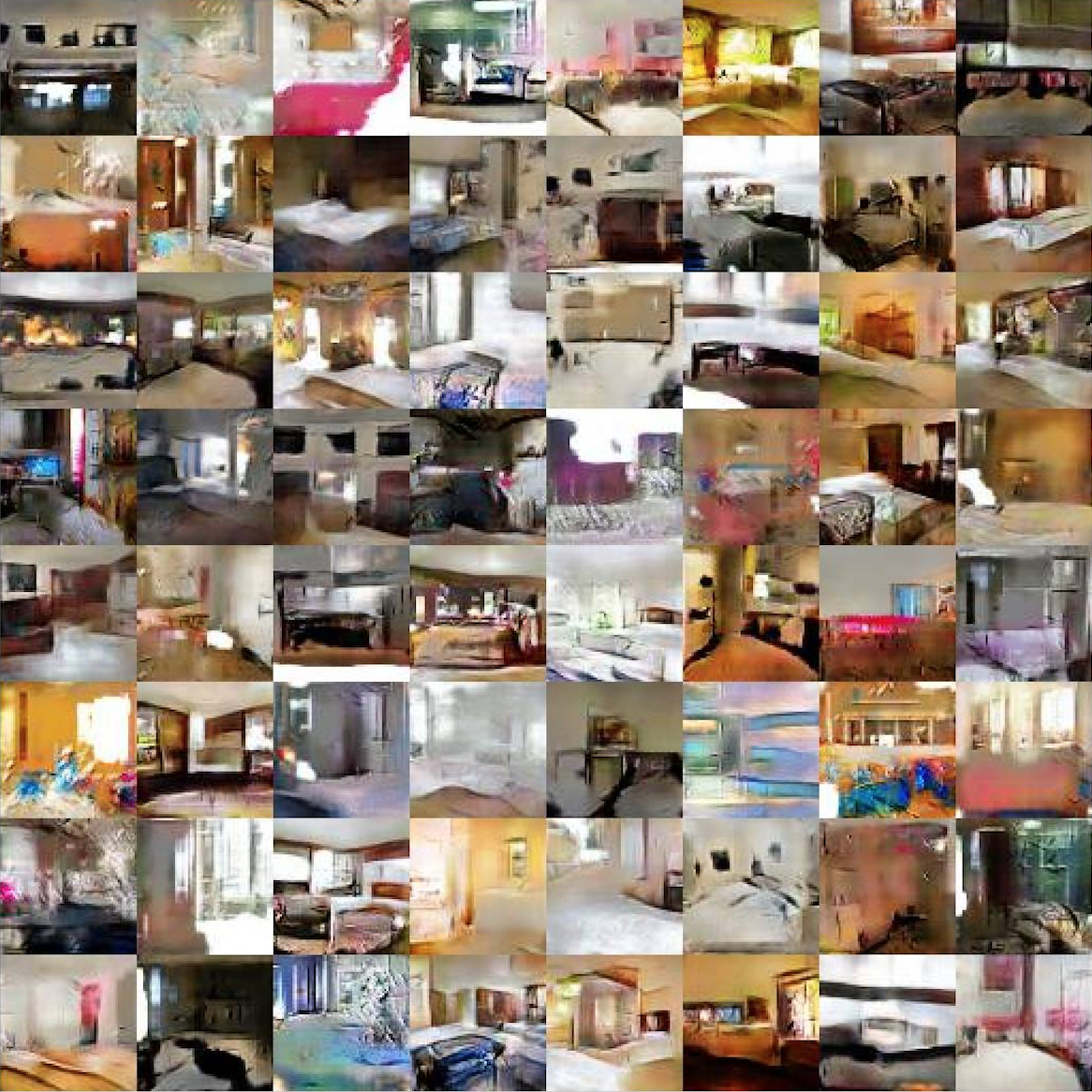}
  \caption{WGAN algorithm: generator is a DCGAN without
batchnorm and constant filter size. Critic is a DCGAN.}
  \bigskip
\includegraphics[width=.7\linewidth]{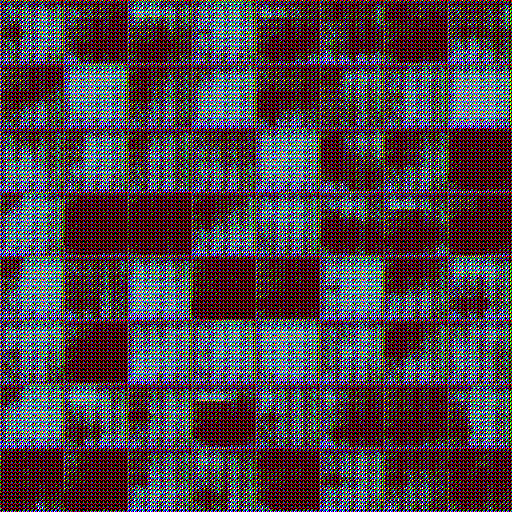}
  \caption{Standard GAN procedure: generator is a DCGAN without
batchnorm and constant filter size. Discriminator is a DCGAN.}
  \label{fig::sheetnobn}
\end{figure}

\begin{figure}[h]
  \centering
  \includegraphics[width=.7\linewidth]{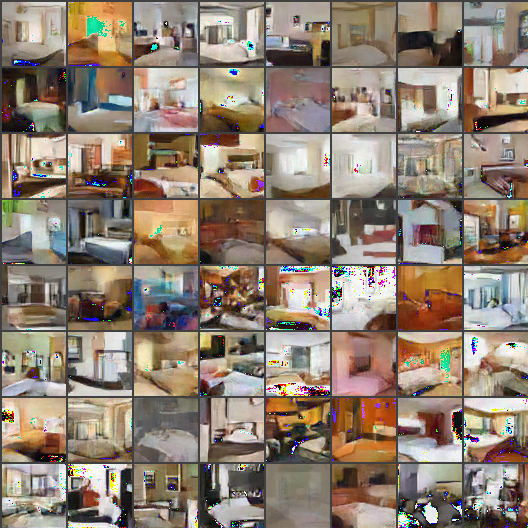}
  \caption{WGAN algorithm: generator is an MLP with 4 hidden
layers of 512 units, critic is a DCGAN.}
  \bigskip
\includegraphics[width=.7\linewidth]{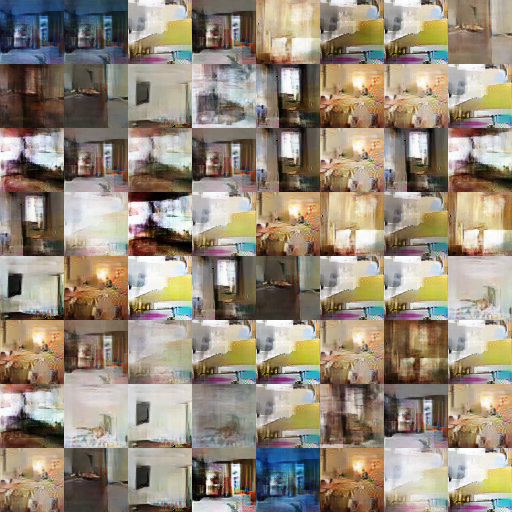}
  \caption{Standard GAN procedure: generator is an MLP with
4 hidden layers of 512 units, discriminator is a DCGAN.}
  \label{fig::sheetMLP}
\end{figure}

\restoregeometry


\end{appendices}

\end{document}